\documentclass{article}

\PassOptionsToPackage{numbers, sort, compress}{natbib}
 \usepackage[preprint]{neurips_2026}

\usepackage[utf8]{inputenc} 
\usepackage[T1]{fontenc}    
\usepackage{hyperref}       
\usepackage{url}            
\usepackage{booktabs}       
\usepackage{amsfonts}       
\usepackage{nicefrac}       
\usepackage{microtype}      
\usepackage[table]{xcolor}  
\usepackage{amsmath}
\usepackage{amssymb}
\usepackage{mathtools}
\usepackage{amsthm}
\usepackage[capitalize,noabbrev]{cleveref}
\usepackage{graphicx}
\usepackage{subfigure}
\usepackage{multirow}
\usepackage{xspace}
\usepackage{enumitem}
\usepackage{wrapfig}
\usepackage[most]{tcolorbox}

\definecolor{recipeblue}{HTML}{FFECBD}
\definecolor{recipebluebg}{HTML}{FFFBEB}

\newtcolorbox{recipebox}[1]{%
  enhanced,
  colback=white,
  colframe=recipeblue,
  coltitle=black,
  colback=recipebluebg,
  colbacktitle=recipeblue,
  fonttitle=\bfseries,
  title={#1},
  boxrule=1.6pt,
  arc=2pt,
  left=6pt,
  right=6pt,
  top=6pt,
  bottom=6pt,
  titlerule=0pt,
  toptitle=2pt,
  bottomtitle=2pt,
}

\usepackage[textsize=tiny]{todonotes}

\usepackage{booktabs}
\usepackage{multirow}

\usepackage{multirow}
\usepackage[table]{xcolor}
\definecolor{c1}{RGB}{255,193,193}
\definecolor{c2}{RGB}{255,224,178}
\definecolor{c3}{RGB}{255,249,196}

\newtheorem{theorem}{Theorem}[section]
\newtheorem{proposition}[theorem]{Proposition}

\newtheorem{definition}[theorem]{Definition}

\newtheorem{remark}[theorem]{Remark}

\newcommand{\modelname}{\textsc{Mochi}} 

\title{\modelname{}: Aligning Pre-training and Inference for Efficient Graph Foundation Models via Meta-Learning}

\author{%
  Jo\~ao Mattos$^*$, Arlei Silva$^*\ddagger$ \\
  $^*$Computer Science Department; $^\ddagger$Ken Kennedy Institute\\
  Rice University\\
  Houston, TX, USA \\
  \texttt{\{jrm28, arlei\}@rice.edu}
}

\begin{document}

\maketitle

\begin{abstract}

We propose \modelname{}\footnote{Source code available at: \url{https://github.com/joaopedromattos/mochi/}}, a Graph Foundation Model (GFM) that aligns pre-training with the few-shot inference protocol used at evaluation. Existing GFMs typically follow a pretrain-then-unify pattern: an encoder is trained with self-supervised, reconstruction, contrastive, or prompt-based objectives, while downstream tasks are unified through a prototype-based readout. However, existing pipelines instantiate prototypes in a particularly restricted linear form that interacts poorly with their pre-training objectives, exposing three structural limitations: origin anchoring, convex-hull inclusion, and norm-coupled miscalibration. \modelname{} addresses this mismatch with episodic meta-learning: each step samples a few-shot node-, link-, or graph-level task and fits a closed-form ridge readout on the support set, propagating gradients through the solve to update the encoder. Since the meta-objective directly matches inference, \modelname{} avoids the costly full-graph negative sampling required by reconstruction- and contrastive-style pre-training. Across 20 evaluation tasks spanning node, link, and graph classification, \modelname{} and its per-step task-balanced variant \modelname{}++ achieve competitive aggregate performance while requiring 8$\sim$27 times less training time than the strongest baseline.
\end{abstract}

\section{Introduction}

Universal Graph Foundation Models (GFMs) aim to learn transferable representations that generalize across domains, tasks, and structural abstraction levels, including nodes, edges, subgraphs, motifs, and entire graphs. A central design challenge is \emph{task unification}: a model trained once should support heterogeneous downstream tasks through a common inference interface, while adapting from only a small labeled support set.

Most current GFMs follow a pretrain-then-unify pattern: an encoder is trained with self-supervised, reconstruction, contrastive, prompt-based, or task-tree objectives, and downstream tasks are then mapped to a lightweight readout~\citep{xia2024anygraph,fang_universal_2024,wang2024gft,liu2023graphprompt}. A common choice is \emph{prototype-based classification}~\citep{xia2024anygraph,liu2023graphprompt,liu_one_2024,he2025unigraph,wang2024gft}: class prototypes are formed by averaging labeled support embeddings, and queries are classified by similarity to these prototypes. This mechanism is attractive because it is parameter-free, supports arbitrary label sets, and reuses one scoring rule across node-, edge-, and graph-level tasks.

However, existing GFM pipelines typically instantiate prototypes in a particularly restricted form. Many use unnormalized inner-product similarity, either for simplicity or to match the inner-product heads used by link-prediction or reconstruction-style pre-training objectives~\citep{xia2024anygraph,liu2023graphprompt,liu_one_2024,he2025unigraph,wang2024gft}. Under this scoring rule, prototype classification becomes a restricted linear classifier whose weights are fixed to support-set centroids and whose bias terms are absent~\citep{snell2017prototypical,hou2022closer}. We show that this design is vulnerable to three structural failure modes in GFM task unification: (i) \emph{origin anchoring}, where decision boundaries are locked to the origin (\S\ref{sec:origin-anchored-failure}); (ii) \emph{convex-hull inclusion}, where a class whose prototype lies in the convex hull of the others cannot be the unique winner (\S\ref{sec:convex-hull}); and (iii) \emph{norm-coupled miscalibration}, where softmax confidence is driven by embedding norms rather than calibrated posterior probabilities (\S\ref{sec:calibration}).

These readout limitations are compounded by a second mismatch: pre-training objectives in current GFMs are only indirectly aligned with the few-shot inference protocol used at evaluation. Reconstruction, link-prediction, and contrastive objectives shape embeddings to preserve structural or connectivity patterns through large numbers of negative samples~\citep{qiu2018network}, rather than support-set adaptation and query-set prediction. The result is pre-training that is both statistically misaligned with downstream tasks and computationally expensive.

We propose \modelname{}, a GFM trained via episodic meta-learning that addresses both mismatches. At each step, the encoder receives a few-shot episode sampled from a node-, link-, or graph-level task; a closed-form ridge readout is fit on the support set, and gradients from the query loss flow through the solve to update the encoder. This aligns pre-training directly with the evaluation protocol while preserving a task-unified inference interface. Since the ridge readout is a learned linear classifier rather than a fixed centroid rule, it is not structurally constrained in the ways identified above. Training cost is also substantially reduced: because the meta-objective is the downstream few-shot task itself, \modelname{} uses negatives only at the small support-query scale of link-prediction episodes, avoiding the full-graph negative sampling that drives the cost of reconstruction- and contrastive-style pre-training.

Across 20 evaluation tasks spanning node, link, and graph classification, \modelname{} and a per-step task-balanced variant \modelname{}++ achieve competitive aggregate performance, with particularly strong node-classification results and consistent performance across task families. \modelname{} and \modelname{}++ train in 0.4 and 1.28 GPU-hours respectively, 8$\sim$27$\times$ less than the strongest evaluated baseline.

Our main contributions are:
\begin{itemize}
    \item \textbf{Prototype limitations in GFM task unification.} We show that the restricted prototype readouts adopted by current GFM pipelines exhibit origin anchoring, convex-hull inclusion, and norm-coupled miscalibration, and that common graph pre-training objectives can amplify these issues through the embedding geometry they induce (Section~\ref{sec:lp-unify}).

    \item \textbf{Efficient episodic meta-learning for GFMs.} We introduce \modelname{}, a task-unified GFM that meta-trains an encoder through a differentiable closed-form ridge readout across node-, link-, and graph-level episodes, aligning pre-training with few-shot inference (Section~\ref{sec:ttt}).

    \item \textbf{Accuracy-cost evaluation.} Across 20 evaluation tasks, \modelname{} and \modelname{}++ achieve competitive aggregate performance with 8$\sim$27$\times$ lower training time than the strongest baseline; synthetic experiments isolate the proposed failure modes (Section~\ref{sec:experiments}).
\end{itemize}

  \begin{figure*}[t]
  \centering
\includegraphics[width=0.9\textwidth]{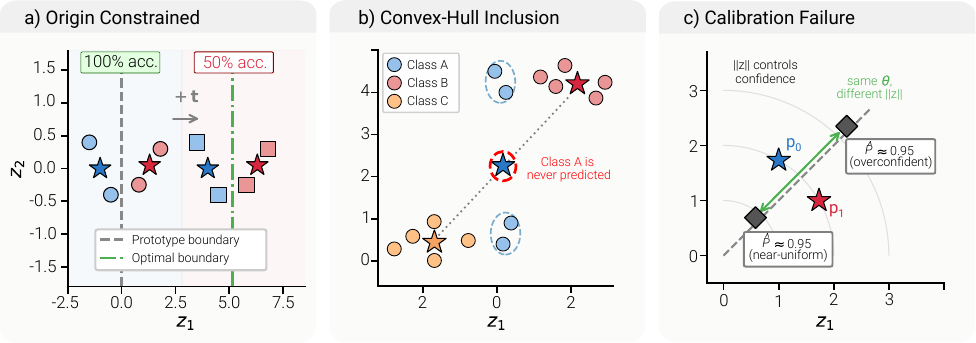}
  \caption{Three pitfalls of prototype classifiers in GFMs. $\bigstar$ denotes class prototypes, {\Large$\bullet$} original embeddings, {$\blacksquare$} translated embeddings, $\circ$ query nodes. \textbf{(a)~Origin-Anchored Failure} (\S\ref{sec:origin-anchored-failure}): a translation $\mathbf{t}$ moves the data cloud but the origin-anchored boundary (dashed) stays fixed, collapsing accuracy. \textbf{(b)~Convex-Hull Prototype Failure} (\S\ref{sec:convex-hull}): the prototype $\mathbf{p}_A$ of a bimodal class falls inside the convex hull of the other prototypes, so $A$ is never the argmax. \textbf{(c)~Calibration Failure} (\S\ref{sec:calibration}): query norms $\|\mathbf{z}\|$ control softmax confidence independently of posterior probability.}
  \label{fig:prototype-pitfalls}
  \end{figure*}

\section{GFM Task Unification and Pitfalls of Restricted Prototype Readouts}
\label{sec:lp-unify}

\paragraph{(Universal) Graph Foundation Models.} Let $G=(V,E)$ be an undirected graph with $n=|V|$ nodes, adjacency matrix $A\in\{0,1\}^{n\times n}$, and optional node features $X\in\mathbb{R}^{n\times d}$. A GFM can be decomposed into three stages~\citep{wang2025graph_2,wang2025graph}: (i)~\emph{feature and structural alignment}, which maps heterogeneous inputs to a shared space, typically via SVD over node features or the graph Laplacian~\citep{xia2024anygraph}; (ii)~\emph{pre-training}, in which a graph encoder $f_\theta$ mapping $(A,X)$ to node embeddings $\{z_v \in \mathbb{R}^{d_z}\}_{v\in V}$ is trained to yield transferable representations; and (iii)~\emph{task unification}, which combines these embeddings into a single inference interface across node-, edge-, subgraph-, and graph-level tasks. Our work targets (ii) and (iii): we identify pitfalls induced by prototype-based task unification, and propose a meta-learning framework that yields a task-general, sample-efficient training procedure.

\paragraph{Prototype classification setup.}
Given labeled supports $S_c$ for each class $c$, prototype classifiers~\citep{snell2017prototypical} compute class prototypes 
$\mathbf{p}_c = |S_c|^{-1}\sum_{v \in S_c}\mathbf{z}_v$
and classify queries by prototype similarity or distance. While squared-Euclidean distance yields a translation-invariant rule, many GFM pipelines use unnormalized inner-product scores, often for simplicity or to reuse link-prediction or reconstruction heads:
$\hat{y}_u = \arg\max_c \langle \mathbf{z}_u,\mathbf{p}_c\rangle$.
This gives a unified pairwise-scoring interface, but restricts the classifier to centroid-fixed weights and no bias terms~\citep{snell2017prototypical,hou2022closer}. We analyze the inner-product readout.

\subsection{Origin-Anchored Prototype Failure}
\label{sec:origin-anchored-failure}


Prototype classifiers with inner-product scores place pairwise decision boundaries through the origin: for classes $i$ and $j$, the boundary is
$\{z : \langle z, p_i-p_j\rangle = 0\}$.
Thus, predictions depend not only on the relative geometry of the classes, but also on the global position of the embedding cloud. If all embeddings are translated, pairwise distances, affine separability, and inter-centroid separations remain unchanged, yet the origin-anchored boundary does not move with the data. A task that remains linearly separable can therefore become unclassifiable by this readout.

\begin{remark}[Translation sensitivity of origin-anchored prototypes]
\label{rem:translation}
Let $\mathcal{Z}=\{z_1,\ldots,z_n\}\subset\mathbb{R}^{d_z}$ with labels $y_1,\ldots,y_n\in[C]$, and let $t\in\mathbb{R}^{d_z}$. Under the translation $z_i' = z_i + t$, prototype differences are invariant, $p_i' - p_j' = p_i - p_j$, so the prototype boundary remains anchored at the origin while the data cloud shifts by $t$. Its position relative to the data is then governed by the embedding mean rather than by task geometry, motivating centering remedies used in few-shot image classification~\citep{wang2019simpleshot,hou2022closer}.
\end{remark}

When the translation along $p_i - p_j$ exceeds the maximum within-class spread in that direction, all points fall on the same side of the boundary and a balanced binary task collapses to $50\%$ accuracy despite remaining trivially separable (Figure~\ref{fig:prototype-pitfalls}a). Prototype accuracy is thus governed by the position of the data relative to the origin rather than by representation quality. We verify this on synthetic data in Section~\ref{sec:experiments}; Appendix~\ref{ap:lp-geometry} shows that link-prediction pre-training in GFMs naturally produces embeddings whose mean lies far from the origin, triggering the failure.

\subsection{Convex-Hull Prototype Failure}
\label{sec:convex-hull}


A second limitation of prototype-based inference is its reliance on a single centroid per class. This compression is effective when classes are well represented by their means, but becomes brittle when a prototype falls in a geometrically unfavorable location. Under the origin-anchored score $s_c(z)=\langle z,p_c\rangle$, failure occurs whenever a class prototype lies inside the convex hull of the remaining prototypes (Figure~\ref{fig:prototype-pitfalls}b): it can never align with any query more strongly than all competing prototypes (see example in
Appendix~\ref{ap:multimodal-collapse-example}). Let $\mathrm{conv}(S)$ denote the convex hull of a set $S \subset \mathbb{R}^{d_z}$:

\begin{proposition}[Convex-hull inclusion prevents unique argmax]
\label{prop:domination}
If $p_c \in \mathrm{conv}\bigl(\{p_{c'} : c' \neq c\}\bigr)$, then for every embedding $z \in \mathbb{R}^{d_z}$,
$\langle z, p_c \rangle \leq \max_{c' \neq c} \langle z, p_{c'} \rangle.$
Consequently, class $c$ can never be the unique argmax of the prototype
scores.
\end{proposition}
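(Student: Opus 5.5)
The plan is to use linearity of the score $s_c(z)=\langle z,p_c\rangle$ in the prototype argument, together with the elementary fact that a convex combination of reals cannot exceed their maximum.

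First, unpack the hypothesis $p_c \in \mathrm{conv}\bigl(\{p_{c'} : c' \neq c\}\bigr)$. Since the set of competing prototypes is finite, it gives weights $\lambda_{c'} \ge 0$ for $c' \neq c$ with $\sum_{c'\neq c}\lambda_{c'} = 1$ and $p_c = \sum_{c'\neq c}\lambda_{c'} p_{c'}$. No Carathéodory-type argument is needed, because the hull of finitely many points is exactly the set of such combinations.

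Next, fix an arbitrary $z \in \mathbb{R}^{d_z}$. By bilinearity of the inner product,
\[
\langle z, p_c\rangle = \sum_{c'\neq c}\lambda_{c'}\langle z, p_{c'}\rangle \le \Bigl(\sum_{c'\neq c}\lambda_{c'}\Bigr)\max_{c'\neq c}\langle z, p_{c'}\rangle = \max_{c'\neq c}\langle z, p_{c'}\rangle .
\]
The inequality bounds each term by the maximum, which is valid because all $\lambda_{c'}$ are nonnegative. The final equality uses $\sum\lambda_{c'}=1$. This proves the displayed inequality for every $z$.

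For the consequence, being the unique argmax would require $\langle z,p_c\rangle > \langle z,p_{c'}\rangle$ for all $c'\neq c$, hence $\langle z,p_c\rangle > \max_{c'\neq c}\langle z,p_{c'}\rangle$. This contradicts the inequality above. There is no real obstacle here. The only points to handle carefully are the nonnegativity of the weights in the bounding step and the remark that ties may still occur, which is why the statement says \emph{unique} argmax. If $p_c$ coincides with some $p_{c'}$, for example, the scores are equal for every $z$.
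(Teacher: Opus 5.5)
Your proof is correct and follows the paper's argument essentially line for line: write $p_c$ as a convex combination of the other prototypes, use linearity of the inner product, and bound the resulting convex combination of scores by their maximum. You spell out the unique-argmax consequence and the possibility of ties, which the paper states in one line, but the approach is the same.
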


The proof is deferred to Appendix~\ref{ap:proof-domination}. Proposition~\ref{prop:domination} identifies a sharp geometric failure mode:
once a class prototype lies inside the convex hull of the remaining
prototypes, the prototype rule cannot assign that class as the unique winner.
Thus, even if the underlying class remains recoverable, summarizing it by a
single centroid can render it effectively invisible to the prototype readout.
While multimodality is one way this can occur, the condition is more general:
any class geometry that pulls its centroid between competing prototypes can
trigger the same mismatch.

Exact convex-hull inclusion is a boundary case, but it captures a broader
instability. In Appendix~\ref{ap:eps-domination}, we introduce a soft
$\varepsilon$-inclusion variant showing that nearly included prototypes can
achieve at most a vanishingly small score advantage. As a result, their
predictions become highly unstable: small perturbations to the embeddings or
support set can eliminate any residual preference for the class.


\paragraph{Structural sources in graphs.}
Unfavorable centroid geometry can arise naturally in graph domains. Link-prediction pre-training often encodes structural properties such as degree or local connectivity through embedding norms and directions~\citep{qiu2018network,HuangSS21}. A semantic class may therefore mix structurally different nodes, such as hubs and peripheral nodes, whose centroid lies between subgroups rather than representing either one. When such centroids are pulled toward competing classes, they can approach or enter the convex hull of other prototypes. Thus, this failure is not restricted to multimodal examples: it can arise whenever one centroid poorly fits the geometry induced by the encoder.

\subsection{Norm-Coupled Miscalibration}
\label{sec:calibration}

Calibrated probabilities, e.g., for active sample selection under few-shot label budgets, are typically obtained by applying a softmax to prototype scores,
$P(\hat{y}=c \mid z)=\frac{\exp(\langle z, p_c\rangle)}{\sum_{c'=1}^C \exp(\langle z, p_{c'}\rangle)},$
but these scores are not naturally calibrated (Figure~\ref{fig:prototype-pitfalls}c). Two structural properties make this problematic.

\paragraph{Prototype norms affect logits independently of class prevalence.}
The logit for class $c$ is $\langle z, p_c\rangle = \|z\|\,\|p_c\|\cos\theta,$
and therefore depends on the prototype norm $\|p_c\|$. This norm is determined
by the geometry of the class embeddings (i.e. their location,
concentration, and multimodality) rather than by class prior probability or
any calibrated estimate of $P(y=c \mid z)$. As a result, the softmax can
systematically favor classes whose prototypes happen to have larger norms,
independently of their true prevalence or predictive uncertainty.

\paragraph{Query norms control confidence.}
The same decomposition shows that $\|z\|$ rescales all logits and therefore
controls the sharpness of the softmax distribution. Larger query norms produce
more confident predictions, while points near the origin yield softer, nearly
uniform outputs. Since link-prediction embeddings tie norm to degree (\S\ref{sec:convex-hull}), predictive confidence becomes entangled with graph structure rather than task-specific uncertainty. Simple image-domain remedies such as L2-normalizing support embeddings~\citep{wang2019simpleshot} are therefore unsuitable here, since they erase the very norm variation that encodes this structural signal. Standard post-hoc calibration methods~\citep{guo2017calibration,platt1999probabilistic} require held-out labeled data,
which is unavailable in the few-shot regime targeted by prototype inference.

\subsection{Implications for GFM Design}
\label{sec:implications}




The failure modes above show that the restricted inner-product prototype readouts used in current GFM pipelines are too rigid for robust task unification: they anchor boundaries at the origin, compress each class to one vector, and adapt to the support set only by recomputing centroids. This is limiting for GFMs, where one embedding space must support diverse node-, edge-, and graph-level tasks.

These observations suggest a simple design principle: GFM readouts should retain the few-shot appeal and task-unified interface of prototypes, while allowing lightweight task-specific adaptation at inference time. In Section~\ref{sec:ttt}, we instantiate this idea with meta-learning~\citep{finn2017model,AndrychowiczDCH16,jin_empowering_2023}, formulating inference as a convex optimization problem over the support set that learns discriminative weights and biases without introducing a task-specific decoder.

\begin{figure*}[htbp]
\centering
\includegraphics[width=0.87\textwidth]{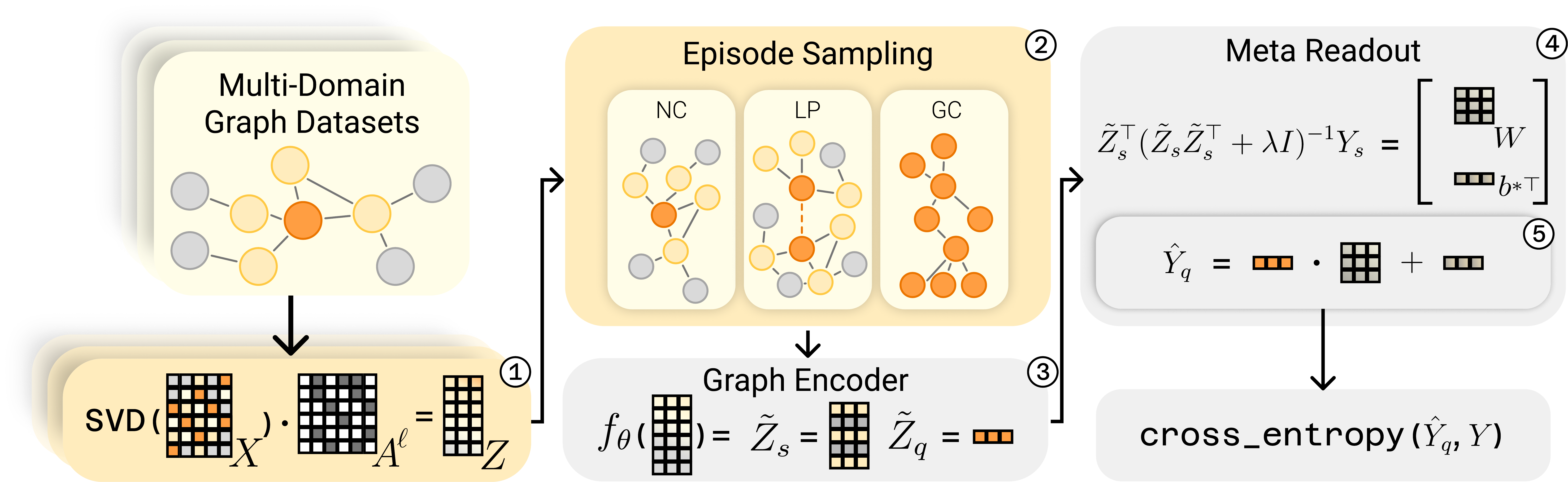}
\caption{\modelname{} preprocessing and training pipeline (\S\ref{sec:ttt}). (1)~Frozen SVD + $\ell$-hop adjacency propagation yields per-node input $Z$. (2)~Episodes are sampled across node, edge, and graph tasks (\S\ref{sec:meta_unification}). (3)~The GAMLP encoder produces support/query representations $\tilde{Z}_s,\tilde{Z}_q$. (4)~The closed-form meta-readout fits $W^*, b^*$ on $\tilde{Z}_s$ (Eq.~\ref{eq:ridge-dual}). (5)~It scores $\tilde{Z}_q$ (Eq.~\ref{eq:logits}), and cross-entropy trains the encoder.}
\label{fig:method-pipeline}
\end{figure*}

\section{Episodic Meta-Learning for Graph Foundation Models}
\label{sec:ttt}

Section~\ref{sec:lp-unify} showed that prototype inference fails because it is
too rigid: it fixes class weights to centroids, anchors decision boundaries at
the origin, and converts dot products into confidence scores with no mechanism
for task-specific adaptation. Replacing prototypes with a stronger linear
readout at inference time helps, but only partially~\citep{lee2019meta}. Ridge or logistic
regression can learn weights and biases from the support set, yet they still
operate on frozen embeddings produced by an objective that is
mismatched to few-shot classification or just operates over the assumption 
that downstream tasks align with the pre-training objectives~\citep{you2019position,srinivasan2019equivalence} (e.g. link prediction, contrastive learning, etc).

We therefore shift the adaptation target from the readout alone to the entire
training procedure. We propose \modelname{}, a Graph Foundation Model trained
via episodic meta-learning~\citep{finn2017model,bertinetto2018meta,lee2019meta} across node classification (NC), link prediction (LP), and
graph classification (GC) tasks (Figure~\ref{fig:method-pipeline}). In each episode, \modelname{} fits a closed-form
ridge classifier on a small support set and updates the encoder through query
loss. This preserves the minimal-data appeal of prototype-style inference while
aligning training with the evaluation protocol. It also yields an efficient
training pipeline: the episode head is solved in closed form, avoiding the
iterative inner-loop optimization required by gradient-based meta-learners~\citep{finn2017model,zhou2019metagnn,huang2020gmeta}. As we show in Section~\ref{sec:experiments}, this alignment
makes \modelname{} both competitive with strong GFM baselines and substantially
more training-efficient.

\subsection{Architecture}
\label{sec:meta_architecture}

\modelname{} consists of a graph encoder $f_\theta$ and a differentiable ridge
readout. The encoder follows a common feature-alignment pattern in graph GFMs~\citep{xia2024anygraph,sun_all_2023,yu_samgpt_2025}: for each training graph we precompute a frozen initial representation $X_0 \in \mathbb{R}^{|V| \times d_z}$ via SVD over node features and the symmetrically normalized adjacency (full construction in Appendix~\ref{ap:real-world-exp-setup}), propagate it along $\ell = 3$ hops of the adjacency, and concatenate the propagations into a per-node input stack $[A^k X_0]_{k=0}^{\ell}$. A trainable GAMLP~\citep{zhang2022graph} applies node-adaptive hop attention over this stack to produce node embeddings $z_v = f_\theta(A,X)_v \in \mathbb{R}^{d_z}$. Only GAMLP is meta-trained; the SVD and hop propagations are deterministic and do not receive gradients (Figure~\ref{fig:method-pipeline}).

Given a support set with embeddings $Z_s \in \mathbb{R}^{n_s \times d_z}$
(one row per episode example, constructed per task type as described in
\S\ref{sec:meta_unification}) and one-hot labels $Y_s \in \{0,1\}^{n_s \times C}$,
the readout solves the $\ell_2$-regularized least-squares problem in dual form:
\begin{equation}
\label{eq:ridge-dual}
\begin{bmatrix} W^* \\ b^{*\top} \end{bmatrix}
=
\tilde{Z}_s^\top
(\tilde{Z}_s \tilde{Z}_s^\top + \lambda I)^{-1}
Y_s,
\end{equation}
where $\tilde{Z}_s = [Z_s \mid \mathbf{1}] \in \mathbb{R}^{n_s \times (d_z+1)}$
augments the support embeddings with an all-ones bias column, $W^* \in \mathbb{R}^{d_z \times C}$
is the weight matrix, and $b^* \in \mathbb{R}^{C}$ is the bias vector. Query logits are then

\begin{equation}
\label{eq:logits}
\hat{Y}_q = \tilde{Z}_q \begin{bmatrix} W^* \\ b^{*\top} \end{bmatrix}.
\end{equation}

\paragraph{Why a ridge readout.}
Ridge regression provides a simple differentiable linear readout with a bias term. The bias frees decision boundaries from the origin, addressing origin anchoring (\S\ref{sec:origin-anchored-failure}), while learned class weights avoid the centroid-fixed geometry behind convex-hull inclusion (\S\ref{sec:convex-hull}); we verify both effects in Section~\ref{sec:exp-synthetic} and Appendix~\ref{ap:posthoc-linear}. However, applying ridge post hoc to existing GFMs is insufficient: it assumes their pre-training objectives already produce embeddings that are linearly separable under a few-shot readout. We therefore embed the ridge solver directly in the meta-training loop. By differentiating through the closed-form solve, \modelname{} optimizes the representation space for few-shot linear separability rather than relying on it as an inference-time assumption. Logistic regression offers similar flexibility but requires an iterative inner loop.

\subsection{Task-Unified Episodes}
\label{sec:meta_unification}

A central design principle of \modelname{} is that task unification happens at
the readout interface. Every episode is converted into the same input format for
the ridge head: a matrix of embeddings together with class labels. The only
task-specific component is how those embeddings are constructed from the encoder
output.

\begin{description}[leftmargin=*,itemsep=2pt]
\item[Node classification.] Each example is a node embedding
      $z_v \in \mathbb{R}^{d_z}$ with label $y_v \in [C]$.
\item[Link prediction.] Each example is an edge embedding
      $z_u \odot z_v \in \mathbb{R}^{d_z}$ (Hadamard product of endpoint
      embeddings), with a binary edge-vs-non-edge label encoded as a one-hot
      vector in $\{0,1\}^{2}$ per example (so the aggregated label matrix is $Y_s \in \{0,1\}^{n_s \times 2}$). This puts LP on the same
      $(Z_s, Y_s)$ interface as NC and GC; sampling details are deferred to
      Appendix~\ref{ap:real-world-exp-setup}.
\item[Graph classification.] Each example is an embedding
      $\bar{z}_G = \frac{1}{|V_G|}\sum_{v \in V_G} z_v$ with a graph-level label.
\end{description}

The readout is otherwise identical across tasks: it receives support
embeddings, fits Eq.~\eqref{eq:ridge-dual}, and predicts query labels. This
gives \modelname{} a single training objective across node-, edge-, and
graph-level problems, without requiring a task-specific decoder architecture.

\subsection{Episodic Meta-Training}
\label{sec:meta_training}

Rather than pre-training on self-supervised or contrastive-based approaches under the 
assumption of downstream task alignment, \modelname{} is trained on the same 
few-shot prediction problem it will face at test time. Each training step proceeds as follows:

\begin{recipebox}{\modelname{}: episodic meta-training for GFMs}
\small

\noindent\textbf{Input:} A collection of source graphs, task types $\mathcal{T}=\{\text{NC},\text{LP},\text{GC}\}$, shots $K$, queries $Q$, and encoder $f_\theta$.
\smallskip

\noindent\textbf{Step 1: Sample an episode.} Select a task type (NC, LP or GC) and a source graph. Construct a support set with $K$ labeled examples per class and a disjoint query set with $Q$ examples per class.
\smallskip

\noindent\textbf{Step 2: Encode the episode.} Compute node embeddings with $f_\theta$ and build task-specific example embeddings as described in Section~\ref{sec:meta_unification}.
\smallskip

\noindent\textbf{Step 3: Fit the readout.} Solve the ridge problem in Eq.~\ref{eq:ridge-dual} on the support set.
\smallskip

\noindent\textbf{Step 4: Meta-update the encoder.} Evaluate the ridge head on the query set, compute the cross-entropy loss, and backpropagate through the closed-form solve to update $\theta$.
\end{recipebox}

The only training signal is the episodic query loss. As a result, the encoder is
optimized directly for what matters at evaluation time: producing embeddings from
which a small support set can induce an accurate linear decision rule. This
training--evaluation alignment is the key difference from post-hoc
readouts on frozen GFM embeddings.

\paragraph{\modelname{}++.} We also evaluate \modelname{}++, which consumes one episode per task type (NC, LP, GC) within the same gradient update via gradient accumulation~\citep{yang2022efficient}, rather than sampling a single task per step. This trades roughly 3$\times$ per-step compute for a more balanced per-task signal (Section~\ref{sec:experiments}).

\paragraph{Pre-training costs in Graph Foundation Models.} Link-prediction pre-training~\citep{xia2024anygraph,liu_one_2024,huang_prodigy_2023,liu2023graphprompt} samples $N_{\text{neg}}$ negatives per positive edge, giving per-epoch cost $\mathcal{O}(|E|N_{\text{neg}}d_z)$; contrastive pre-training~\citep{you_graph_2021} scales similarly with in-batch negatives. Each \modelname{} step instead processes one episode with $n_s=KC$ support and $QC$ query examples, fits the ridge readout in closed form at $\mathcal{O}(n_s^3+n_s^2d_z)$, and uses negatives only for link-prediction episodes. Because $n_s \ll |E|$ and node-/graph-classification episodes require no negatives, the cost is dominated by one encoder pass over $\mathcal{O}((K+Q)C)$ examples, explaining the 8$\sim$27$\times$ wall-clock speedup reported in Section~\ref{sec:experiments}.

\section{Experiments}
\label{sec:experiments}

Our evaluation proceeds along two complementary axes. First, we use controlled
synthetic experiments to verify the geometric failure modes identified in
Section~\ref{sec:lp-unify} under settings where the task structure is fully
known (\S\ref{sec:exp-synthetic}). Second, we evaluate \modelname{} on
real-world node classification, link prediction, and graph classification
benchmarks, comparing both predictive performance and training efficiency
against existing GFM baselines (\S\ref{sec:exp-real-graphs}).


\subsection{Controlled Validation of the Prototype Pitfalls}
\label{sec:exp-synthetic}

We present two controlled synthetic experiments that validate the first two pitfalls from Section~\ref{sec:lp-unify}: origin anchoring (\S\ref{sec:origin-anchored-failure}) and convex-hull inclusion (\S\ref{sec:convex-hull}). In each case, we compare the prototype classifier against a ridge readout with bias. The third pitfall, calibration failure (\S\ref{sec:calibration}), is validated separately in Appendix~\ref{ap:calibration-details}; full setup details and additional variants are in Appendix~\ref{ap:additional-synth}.

\paragraph{Origin-anchored failure under translation.}
We verify Remark~\ref{rem:translation} on a balanced binary
classification task in $\mathbb{R}^{64}$. We generate two Gaussian clusters
with fixed separation and translate the embedding cloud by $t u$, where
$u$ is the cluster-separation direction. Because translation preserves pairwise
distances, the task does not become harder. Figure~\ref{fig:collapse_and_convex_hull}a
shows that a bias-enabled ridge readout maintains near-perfect accuracy
throughout, while prototype accuracy degrades toward chance as the
origin-anchored boundary becomes increasingly misaligned with the translated
data cloud.

%
%

\paragraph{Convex-hull prototype failure under multimodality.}
We validate the $\varepsilon$-relaxed form of Proposition~\ref{prop:domination} (Definition~\ref{def:eps-domination}, Appendix~\ref{ap:eps-domination}) on a three-class problem in $\mathbb{R}^{64}$. One class is bimodal with modes separated by $\delta$; the other two are unimodal with prototypes at $\pm 0.6\delta$ along the same axis, plus a small offset that keeps the task linearly separable. As $\delta$ grows, the unimodal prototypes gain norm while the bimodal prototype remains bounded, so the bimodal class becomes $\varepsilon$-dominated under inner-product scoring. Figure~\ref{fig:collapse_and_convex_hull}b shows its prototype recall collapsing to zero, while ridge maintains high recall. The failure therefore comes from the single-centroid readout, not task difficulty.

\begin{figure}[htbp]
\centering
\includegraphics[width=0.92\linewidth]{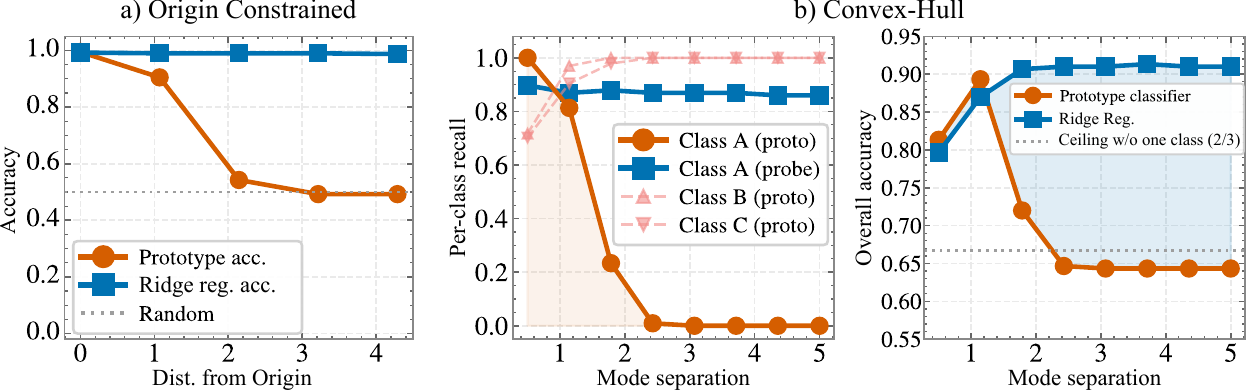}
\caption{Synthetic validation of Remark~\ref{rem:translation} and Proposition~\ref{prop:domination}. \textbf{(a)}~Under translation along the separation direction, ridge with bias stays near-perfect while prototype accuracy degrades toward chance. \textbf{(b)}~\textbf{Left:} prototype recall of the multimodal class drops to zero once $p_A$ enters the remaining-prototype hull. \textbf{Right:} overall accuracy stays high for ridge, as the task remains linearly recoverable.}
\label{fig:collapse_and_convex_hull}
\end{figure}

\subsection{Real-world Datasets and Baseline Comparison}
\label{sec:exp-real-graphs}

\paragraph{Experimental setup.} We evaluate \modelname{} and \modelname{}++ against six baselines across four families of cross-graph transfer methods: a vanilla GNN (GIN), a contrastive method (GraphCL), three GFMs (AnyGraph, GFT, GPF), and a prompt-based approach (GraphPrompt). All methods are evaluated in a $k$-shot ($k \in \{4,8,16,32,64,512\}$) setting across 20 evaluation tasks: node classification (5), link prediction (10), and graph classification (5), with performance averaged over 3 random seeds. Dataset names, training pool, and per-dataset numbers are in Appendices~\ref{ap:real-world-exp-setup}--\ref{ap:perf-tables}.

\paragraph{Convex-hull prototype analysis.}
To test whether the convex-hull failure arises in real-world GFM embeddings, we compute the per-class AnyGraph prototype on \textsc{Photo}, \textsc{Computers}, and \textsc{Cora}, and measure each prototype's distance to the convex hull of the remaining ones in the full embedding space (Appendix~\ref{ap:convex-hull-score}). Prototypes with distance smaller than any hull-defining prototype are flagged as especially susceptible. Figure~\ref{fig:datasets_domination} visualizes the geometry via 2D PCA (distances computed before projection). Across all three datasets, multiple prototypes lie substantially inside the hull, confirming that the failure mode of \S\ref{sec:convex-hull} occurs in real-world embeddings.

\begin{figure}[htbp]
    \includegraphics{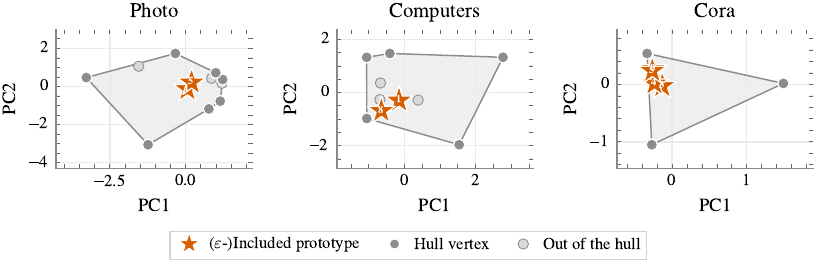}
    \caption{Convex-hull prototype geometry in AnyGraph embeddings on \textsc{Photo}, \textsc{Computers}, \textsc{Cora}. Points are class prototypes projected to 2D via PCA (distance scores computed in the full embedding space). Filled markers lie on the PCA hull; lighter markers are interior; stars flag prototypes with small convex-hull distance. Interior prototypes appear across all three datasets, evidencing the failure mode of \S\ref{sec:convex-hull} in real-world GFMs.}
    \label{fig:datasets_domination}
\end{figure}

\paragraph{Post-hoc linear readouts partially recover AnyGraph's frozen embeddings.}
As a diagnostic, we fit a linear head on frozen AnyGraph embeddings and compare it with AnyGraph's prototype readout and \modelname{}++ (Appendix~\ref{ap:posthoc-linear}, Table~\ref{tab:decision-boundary-acc}). This narrows the prototype gap but does not change the conclusion: \modelname{}++ matches or exceeds the stronger AnyGraph variant on 3 out of 5 NC datasets, with a much lighter training and inference pipeline.

\begin{wrapfigure}{r}{0.48\textwidth}
\centering
\includegraphics[width=\linewidth]{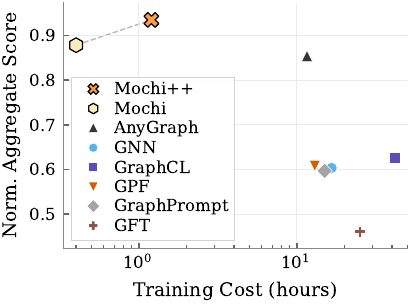}
\caption{Training cost (hours) vs.\ performance. \modelname{}/\modelname{}++ achieves competitive aggregate performance at a fraction of baselines' training cost by aligning pre-training with inference.}
\label{fig:pareto_plot}
\end{wrapfigure}

\paragraph{Training cost versus performance.} To evaluate the practical efficiency of each method, we plot training cost (wall-clock hours) against normalized aggregate performance across all tasks. 

We compute a normalized aggregate score per method by averaging its normalized performance across all $k$-shots for node classification, graph classification, and link prediction benchmarks (metric definition in Appendix~\ref{ap:normalized-aggregate}). Figure~\ref{fig:pareto_plot} displays the resulting Pareto plot on a log-scaled cost axis. \modelname{} achieves competitive aggregate performance at a fraction of the training cost of all baselines (0.4 GPU-hours compared to 11--42 hours for alternatives, a $\sim$27$\times$ speedup over the strongest baseline AnyGraph). The \modelname{}++ variant, which processes one episode per task type in each optimizer step, yields further performance gains at a modest cost increase (1.28 hours, still $\sim$8.6$\times$ faster than AnyGraph) and remains an order of magnitude cheaper than the next-best method. This result underscores a key advantage of our meta-learning-based paradigm: our method avoids the expensive link-prediction pre-training based on negative samples by aligning the pre-training task with the inference procedure, enabling \modelname{}/\modelname{}++ to be more sample efficient.

\paragraph{Head-to-head comparison with AnyGraph.}
Because AnyGraph is the strongest baseline, we provide a direct pairwise
comparison in Figure~\ref{fig:anygraph_radar}. \modelname{}++ leads AnyGraph on
NC, remains competitive on GC, and is a close second on LP, where AnyGraph is
advantaged by pre-training on 15 LP source graphs. This supports our main
conclusion: \modelname{}++ achieves the most consistent performance across
heterogeneous task families, rather than excelling on one family at the expense
of others. Moreover, \modelname{} achieves this using a disjoint multi-task pool
(4 NC, 15 LP, and 4 GC source graphs; Appendix~\ref{ap:real-world-exp-setup}),
with features precomputed once per graph and a meta-training strategy that
avoids large-scale negative sampling, yielding substantially lower training
cost.

\begin{figure}[htbp]
\centering
\includegraphics[width=0.80\linewidth]{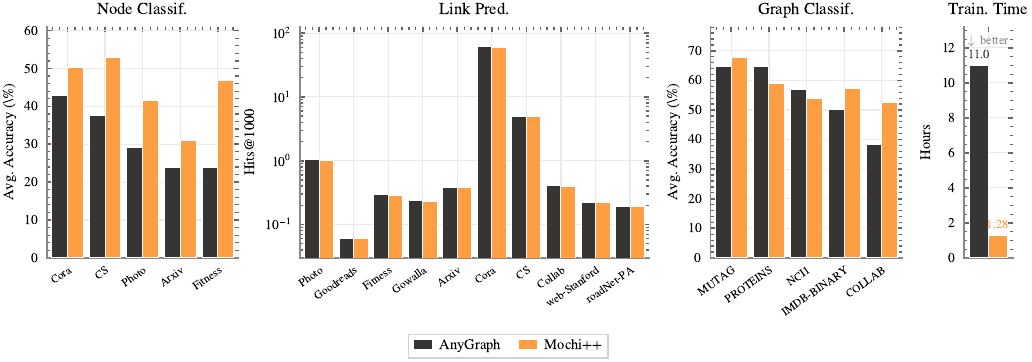}
\caption{Head-to-head of \modelname{}++ vs.\ AnyGraph across NC, LP, GC, and training efficiency. Bars show task-balanced normalized aggregates per domain. \modelname{}++ achieves the most consistent performance across tasks at $\sim$8.5$\times$ less training time.}
\label{fig:anygraph_radar}
\end{figure}

\section{Related Work}
\label{sec:related-work}

\paragraph{Graph Foundation Models.}
Graph foundation models learn representations that transfer across domains, feature spaces, and task granularities~\citep{mao_position_2024,wang2025graph,wang2025graph_2} through diverse pretraining signals: edge or relation prediction~\citep{sun_gppt_2022,liu2023graphprompt,xia2024anygraph,xia_opengraph_2024}, contrastive learning, masked reconstruction, diffusion, structural vocabularies, and task trees~\citep{you_graph_2021,davies_topology_2024,tang_cross-domain_2024,wang2024gft,wang_towards_2025}. Language-assisted variants align attributes and label spaces via textual descriptions~\citep{liu_one_2024,he2025unigraph,li_zerog_2024,chen2024llaga}; prompt and in-context methods adapt pretrained encoders via learned prompts or task-specific reductions~\citep{sun_all_2023,huang_prodigy_2023,fang_universal_2024,bevilacqua_holographic_2025}; and task-specialized GFMs target cross-graph node classification via analytical attention~\citep{zhao_fully-inductive_2025} and cross-graph link prediction via in-context learning~\citep{dong_universal_2024}. Across these approaches, representation pretraining is typically separated from the few-shot decision rule used at evaluation time.

\paragraph{Task Unification and Prototype Readouts.}
Task unification reduces node-, edge-, and graph-level prediction to a common
inference interface~\citep{liu_one_2024,huang_prodigy_2023,sun_gppt_2022,liu2023graphprompt,wang2024gft,wang_towards_2025,bevilacqua_holographic_2025},
typically via a similarity rule or prototype classifier. Prototype
readouts are attractive for their parameter efficiency, but impose a restricted
geometry~\citep{snell2017prototypical,hou2022closer}. This paper studies the
consequences of that restriction for GFMs, where structural pretraining,
multimodal classes, and norm shifts interact with prototype readouts.

\paragraph{Meta-Learning for Task-Adaptive Readouts.}

Differentiable closed-form solvers and convex optimization layers provide
task-specific classifiers while preserving end-to-end meta-learning~\citep{vilalta2002perspective,vinyals2016matching,bertinetto2018meta,lee2019meta}.
In the graph domain, several works apply meta-learning to GNN backbones for
few-shot node classification~\citep{zhou2019metagnn,huang2020gmeta,wang2022tent,liu2024metagps};
Graph Prototypical Networks~\citep{ding2020gpn} combine meta-learning
with prototype readouts, closest in spirit to \modelname{}; the differences are
that \modelname{} replaces the prototype rule with a differentiable ridge
readout and unifies across node, link, and graph-level episodes.

\section{Conclusion}
We revisited task unification in Graph Foundation Models through the downstream
inference rule. Prototype classifiers provide a convenient pairwise scoring
interface across heterogeneous tasks, but introduce structural limitations:
origin-anchored decision boundaries, convex-hull failure under unfavorable class
geometry, and norm-dependent miscalibration. Post-hoc linear readouts can
partially mitigate these issues, but cannot undo representations shaped by a
mismatched pre-training objective.

\paragraph{Limitations and broader impacts.}
\modelname{}'s efficiency relies on a frozen SVD plus hop-propagation feature
stack, a manually selected meta-training pool, and a dual-form ridge solve best
suited to the $n_s \ll d_z$ regime (see Appendix~\ref{ap:limitations} for a detailed discussion). 
\modelname{} is under the same broader-impact concerns as previous GFMs trained or evaluated on sensitive
graph data.

\modelname{} addresses the mismatch between pre-training and few-shot inference
by moving adaptation into pre-training: the encoder is trained episodically
across node-, edge-, and graph-level tasks with a differentiable ridge readout
in the loop. The result is a GFM with consistent performance across node-, link-,
and graph-level tasks, leading existing baselines on the overall normalized
aggregate while substantially reducing training cost.

\bibliographystyle{unsrtnat}
\bibliography{refs.bib}

\newpage
\appendix
\section{Why Link-Prediction Embeddings Trigger Prototype Failure}
\label{ap:lp-geometry}

The origin-anchored failure identified in Section~\ref{sec:origin-anchored-failure} is not merely a theoretical possibility; the geometry of link-prediction embeddings makes it the expected outcome. Link-prediction training with an inner-product decoder
$\hat{A}_{uv} = \sigma(z_u^\top z_v)$ produces embeddings with three
well-documented geometric properties
\citep{qiu2018network}:

\begin{enumerate}[leftmargin=*,itemsep=3pt]
\item \textbf{Norms encode degree.} For matrix-factorization-based models,
      $\|z_v\| \approx \sqrt{d_v}$, where $d_v$ is the degree of node $v$.
      This arises because higher-degree nodes must produce larger inner
      products with more neighbors.

\item \textbf{Neighbors align directionally.} Connected nodes have similar
      angular positions in embedding space, since
      $z_u^\top z_v = \|z_u\|\,\|z_v\|\cos\theta_{uv}$ must be large for
      edges $(u,v) \in E$.

\item \textbf{Embeddings cluster in a cone.} For assortative graphs
      (where connected nodes share properties), properties~(i) and~(ii)
      together concentrate embeddings in a low-dimensional cone within a
      half-space. The embeddings rarely cross the origin.
\end{enumerate}

Property~(iii) means that embeddings typically lie in a half-space, so
the origin-constrained prototype boundary is misaligned with any task whose natural
boundary does not pass through the origin (Remark~\ref{rem:translation}).

Property~(i) makes this particularly damaging for degree-correlated tasks.
If class identity correlates with node degree (e.g., distinguishing hub
nodes from peripheral nodes), the true decision boundary is a threshold on
$\|z\|$, which corresponds to a sphere centered at the origin---or, in
projection, a hyperplane at offset $\delta > 0$. The prototype boundary at
the origin cannot represent this threshold.

\section{Proof of Proposition~\ref{prop:domination}}
\label{ap:proof-domination}

\begin{proof}
Since
\[
p_c \in \mathrm{conv}\bigl(\{p_{c'}\}_{c' \neq c}\bigr),
\]
there exist coefficients $\lambda_{c'} \ge 0$ with
$\sum_{c' \neq c} \lambda_{c'} = 1$ such that
\[
p_c = \sum_{c' \neq c} \lambda_{c'} p_{c'}.
\]
Therefore, for any $z \in \mathbb{R}^{d_z}$,
\[
\langle z, p_c \rangle
=
\left\langle z, \sum_{c' \neq c} \lambda_{c'} p_{c'} \right\rangle
=
\sum_{c' \neq c} \lambda_{c'} \langle z, p_{c'} \rangle.
\]
The right-hand side is a convex combination of the scores
$\{\langle z, p_{c'} \rangle\}_{c' \neq c}$, and is therefore bounded above by
their maximum:
\[
\langle z, p_c \rangle
\le
\max_{c' \neq c} \langle z, p_{c'} \rangle.
\]
Hence class $c$ cannot be the unique argmax of the prototype scores.
\end{proof}

\section{Origin-Anchored Failure Example}
\label{ap:prototype-collapse-example}

\paragraph{Illustrative Example.}
Consider four nodes with 1D embeddings forming two balanced classes:
\[
\text{Class 0: } z_1 = -1,\; z_2 = 0 \qquad \text{Class 1: } z_3 = 1,\; z_4 = 2.
\]
The prototypes are $p_0 = -0.5$ and $p_1 = 1.5$, and the prototype decision
boundary $\langle z,\; p_0 - p_1 \rangle = 0$ reduces to $-2z = 0$, i.e.,
$z = 0$. This correctly separates the classes (all of class~0 has $z \leq 0$,
all of class~1 has $z \geq 1$), giving \textbf{100\% accuracy}.

Now translate every embedding by $t = 5$:
\[
\text{Class 0: } z_1' = 4,\; z_2' = 5 \qquad \text{Class 1: } z_3' = 6,\; z_4' = 7.
\]
The task is identical: class~0 and class~1 have the same separation, the same
margin, and the same balance. A $k$-NN classifier or a linear probe with bias
(boundary at $z' = 5.5$) still achieves perfect accuracy.

The prototypes become $p_0' = 4.5$ and $p_1' = 6.5$. The prototype gap is
unchanged: $|p_1' - p_0'| = |p_1 - p_0| = 2$. However, the prototype
decision boundary is still $z' = 0$, which is now far from the data. Since all
shifted embeddings satisfy $z' \geq 4 > 0$, the prototype score
$z' \cdot p_1' > z' \cdot p_0'$ holds for every node (both prototypes are
positive, and $p_1' > p_0'$). The prototype classifier predicts class~1 for
all nodes, yielding \textbf{50\% accuracy} on a trivially separable task.

This is the origin constraint of
Remark~\ref{rem:translation}: the prototype gap is preserved, the task
difficulty is preserved, but prototype accuracy collapses because its decision
boundary cannot shift with the data.

\section{Convex-Hull Inclusion Example}
\label{ap:multimodal-collapse-example}

\paragraph{Illustrative Example.}
Consider three class prototypes
\[
p_A = (2, 2), \quad p_B = (1, 1), \quad p_C = (0, 0).
\]
Since $p_B = \tfrac{1}{2}(p_A + p_C)$, $p_B \in \mathrm{conv}(\{p_A, p_C\})$. By Proposition~\ref{prop:domination}, $\langle z, p_B\rangle$ is never the strict argmax of the prototype scores: for any query with $z_1 + z_2 > 0$ the score $\langle z, p_A\rangle = 2(z_1+z_2)$ exceeds $\langle z, p_B\rangle = z_1 + z_2$, and for any query with $z_1 + z_2 < 0$ the score $\langle z, p_C\rangle = 0$ does. The prototype classifier therefore never predicts class~B, regardless of the underlying samples.

A linear probe with bias, in contrast, is not constrained to centroid-based scoring. Three independent linear scores with intercepts can place each prototype as the unique argmax of its own class (e.g., $\mathrm{logit}_A(z) = \tfrac{z_1+z_2}{2} - 1$, $\mathrm{logit}_B(z) = \tfrac{1}{2}$, $\mathrm{logit}_C(z) = 1 - \tfrac{z_1+z_2}{2}$). The failure is a property of the prototype rule, not of the underlying class geometry.

\section{\texorpdfstring{$\varepsilon$-inclusion}{epsilon-inclusion}}
\label{ap:eps-domination}

Exact inclusion (Proposition~\ref{prop:domination}) is a boundary case. We
introduce a soft variant that captures near-inclusion.

\begin{definition}[$\varepsilon$-Inclusion]
\label{def:eps-domination}
Class $c$ is $\varepsilon$-included if there exist $\lambda_{c'} \geq 0$ with
$\sum_{c' \neq c}\lambda_{c'} = 1$ such that
$\|p_c - \sum_{c' \neq c} \lambda_{c'} p_{c'}\| < \varepsilon$.
\end{definition}

\begin{proposition}
\label{prop:eps-domination}
If class $c$ is $\varepsilon$-included, then for all $z$ with $\|z\| \leq R$:
$\max_{c' \neq c}\langle z, p_{c'}\rangle - \langle z, p_c\rangle
\geq -\varepsilon R$.
That is, class $c$ can win the argmax by at most margin $\varepsilon R$, which
vanishes as $\varepsilon \to 0$.
\end{proposition}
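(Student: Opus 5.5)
The plan is to reduce to the exact-inclusion argument of Proposition~\ref{prop:domination} and then control the residual with Cauchy--Schwarz. By Definition~\ref{def:eps-domination}, there are weights $\lambda_{c'}\ge 0$ with $\sum_{c'\neq c}\lambda_{c'}=1$ such that the point
\[
q := \sum_{c'\neq c}\lambda_{c'}p_{c'} \in \mathrm{conv}\bigl(\{p_{c'}\}_{c'\neq c}\bigr)
\]
satisfies $\|p_c - q\| < \varepsilon$. I will decompose the score of class $c$ as $\langle z,p_c\rangle = \langle z,q\rangle + \langle z,p_c-q\rangle$ and bound each term separately.

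For the first term, I repeat the argument from the proof of Proposition~\ref{prop:domination} with $q$ in place of $p_c$. Linearity gives $\langle z,q\rangle=\sum_{c'\neq c}\lambda_{c'}\langle z,p_{c'}\rangle$. This is a convex combination of the competing scores, so it is at most $\max_{c'\neq c}\langle z,p_{c'}\rangle$.

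For the second term, Cauchy--Schwarz together with $\|z\|\le R$ gives
\[
\langle z,p_c-q\rangle \le \|z\|\,\|p_c-q\| \le \varepsilon R,
\]
with strict inequality whenever $z\neq 0$. Adding the two bounds yields $\langle z,p_c\rangle \le \max_{c'\neq c}\langle z,p_{c'}\rangle+\varepsilon R$, which rearranges to the claimed inequality. The interpretive sentence then follows immediately: the largest amount by which class $c$ can beat all rivals is bounded by $\varepsilon R$, and this tends to $0$ as $\varepsilon\to 0$ for fixed $R$.

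There is no real obstacle here. The only points needing care are, first, that the convex weights come from the definition itself rather than requiring exact hull membership. Second, the boundedness $\|z\|\le R$ is essential: the residual term scales linearly in $\|z\|$, so without it the margin would be unbounded. I will note that the strict inequality in the definition yields the non-strict bound $\ge -\varepsilon R$ as stated, which is all that is required.
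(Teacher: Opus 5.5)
Your proposal is correct and matches the paper's proof: both write $p_c$ as a convex combination of the other prototypes plus a residual of norm less than $\varepsilon$, bound the convex-combination term by $\max_{c' \neq c}\langle z, p_{c'}\rangle$, and bound the residual term by Cauchy--Schwarz and $\|z\| \le R$. Your remark that strictness only holds for $z \neq 0$ (or $R > 0$) is slightly more careful than the paper's final ``$> -R\varepsilon$'', though either way only the non-strict bound is needed.
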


\begin{proof}
By Definition~\ref{def:eps-domination}, $p_c = \sum_{c' \neq c} \lambda_{c'} p_{c'} + \delta$ for some $\|\delta\| < \varepsilon$ with $\sum_{c' \neq c} \lambda_{c'} = 1$ and $\lambda_{c'} \geq 0$. Then
$\max_{c' \neq c}\langle z, p_{c'}\rangle - \langle z, p_c\rangle = \max_{c' \neq c}\langle z, p_{c'}\rangle - \sum_{c' \neq c} \lambda_{c'} \langle z, p_{c'}\rangle - \langle z, \delta\rangle \geq -\langle z, \delta\rangle \geq -\|z\|\,\|\delta\| > -R\varepsilon$,
where the first inequality follows from the convex combination being bounded by the maximum and the second from the Cauchy-Schwarz inequality.
\end{proof}

\section{Additional Details on Synthetic Experiments}
\label{ap:additional-synth}

This appendix provides the full setup details and supplementary synthetic
results omitted from Section~\ref{sec:experiments} for space.

\subsection{Origin-Anchored Failure Under Translation}
\label{ap:translation-details}

\paragraph{Setup.}
We verify Remark~\ref{rem:translation} with a controlled binary
classification experiment. We generate $n=800$ embeddings in
$\mathbb{R}^{64}$ as two balanced Gaussian clusters separated by a fixed margin
$\Delta$. Let $u$ denote the unit vector along the cluster-separation axis. We
then translate the entire embedding cloud by $t\,u$ for $t \in [0,5]$.

This transformation preserves all pairwise distances, and therefore leaves the
underlying task geometry unchanged. In particular, a distance-based classifier
such as $k$-NN would have identical accuracy for every value of $t$. We compare
the prototype classifier against a ridge readout with bias.

\paragraph{Results.}
As shown in Figure~\ref{fig:collapse_and_convex_hull}a, the ridge readout maintains
near-perfect accuracy throughout, absorbing the translation into its intercept.
In contrast, prototype accuracy degrades monotonically toward $50\%$ as $t$
increases. This is exactly the behavior predicted by
Remark~\ref{rem:translation}: the translated data cloud moves farther
from the origin while the origin-anchored prototype boundary remains fixed.
A handcrafted two-dimensional example is provided in
Appendix~\ref{ap:prototype-collapse-example}.

\subsection{Multimodal Convex-Hull Inclusion Details}
\label{ap:multimodal-details}

\paragraph{Setup.}

We verify Proposition~\ref{prop:domination} in its $\varepsilon$-relaxed form (Definition~\ref{def:eps-domination}) on a three-class problem in $\mathbb{R}^{64}$. Class~A is bimodal, with 50 samples per mode centered at $(\mp\delta, 1, 0, \ldots, 0)$. Classes~B and~C each consist of 100 unimodal samples centered at $(\mp 0.6\delta, -0.5, 0, \ldots, 0)$. All samples include isotropic Gaussian noise.

The class prototypes are
\[
p_A = (0, 1, 0, \ldots), \quad p_B = (-0.6\delta, -0.5, 0, \ldots), \quad p_C = (+0.6\delta, -0.5, 0, \ldots),
\]
so $p_A$ stays fixed and $p_B, p_C$ drift along the $x$-axis as $\delta$ grows. The $y$-offset (dim~1) gives A a different mean from B, C, keeping the task linearly separable. We parameterize the \emph{mode separation} by $\delta \ge 0$. At $\delta = 0$, the three Gaussians coincide and predictions are at chance. As $\delta$ grows, A's modes (dominant component along $\pm x$) develop ever-larger inner products with $p_B$ (mode~1) and $p_C$ (mode~2), eventually overwhelming $\langle z, p_A\rangle$ which is bounded by the constant $y$-projection: A becomes $\varepsilon$-dominated by $\{p_B, p_C\}$.

For each value of $\delta$, we compute:
\begin{enumerate}[leftmargin=*,itemsep=2pt]
    \item the distance from $p_A$ to $\mathrm{conv}(p_B,p_C)$, corresponding
    to the $\varepsilon$ in Definition~\ref{def:eps-domination};
    \item per-class recall of the prototype classifier;
    \item per-class recall of a ridge readout with bias; and
    \item overall accuracy of both methods.
\end{enumerate}
We use a 50/50 support/query split and average all results over 20 random
seeds.

\paragraph{Results.}

As $\delta$ increases, A's per-mode samples develop dominant inner products with $p_B$ (mode~1) and $p_C$ (mode~2), while $\langle z, p_A\rangle$ stays bounded by the $y$-projection.

Figure~\ref{fig:collapse_and_convex_hull}b shows that prototype recall for class A then drops
sharply, reaching exactly $0\%$ in the fully dominated regime, while classes B
and C maintain high recall. Consequently, overall prototype accuracy plateaus
near $66\%$: effectively random on the dominated class and nearly perfect on
the remaining two.

The ridge readout, in contrast, is not constrained to class centroids. Class A
recall remains between roughly $80$--$85\%$ across all values of $\delta$,
showing that the task remains linearly recoverable even when the prototype
classifier fails completely. The growing gap between ridge and prototype
performance in the dominated regime confirms the practical relevance of
Proposition~\ref{prop:domination}.

\subsection{Calibration Analysis}
\label{ap:calibration-details}

\paragraph{Goal.}
This experiment complements Section~\ref{sec:calibration} by illustrating two
structural sources of calibration error in prototype softmax scores and
comparing them against post-hoc alternatives.

\begin{figure}[t]
\centering
\includegraphics[width=0.82\linewidth]{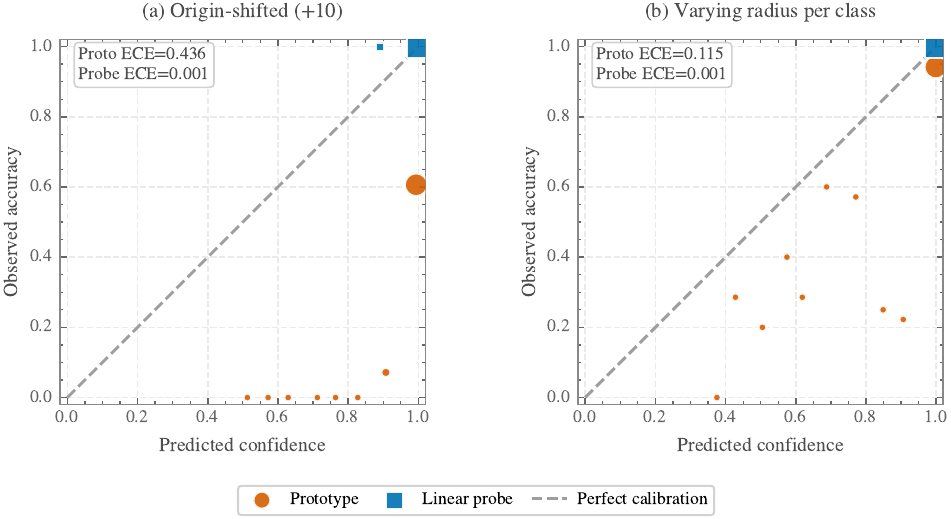}
\caption{Calibration analysis. Reliability diagrams for prototype softmax,
temperature-scaled prototypes, and ridge logistic regression under two
synthetic geometries: origin-shifted embeddings (left) and varying-radius
embeddings (right). The diagonal indicates perfect calibration; ECE is reported
in each panel.}
\label{fig:calibration}
\end{figure}

\paragraph{Setup.}
We evaluate calibration under two embedding geometries designed to isolate the
mechanisms identified in Section~\ref{sec:calibration}:
\begin{itemize}[leftmargin=*,itemsep=2pt]
    \item \textbf{Origin-shifted.} All embeddings are translated by a fixed vector $t$, so both queries $z$ and prototypes $p_c$ become $z+t$ and $p_c+t$. Expanding the post-translation inner product, $\langle z+t, p_c+t\rangle = \langle z, p_c\rangle + \langle z, t\rangle + \langle t, p_c\rangle + \|t\|^2$, the terms $\langle z, t\rangle$ and $\|t\|^2$ are class-independent and cancel in the softmax, while $\langle t, p_c\rangle$ acts as a class-specific additive bias. This stresses bias-induced miscalibration: a single global temperature cannot subtract a different bias per class.
    \item \textbf{Varying-radius.} Classes are placed at different distances
    from the origin, so that prototype norms differ substantially across
    classes. This stresses the prototype-norm bias discussed in
    Section~\ref{sec:calibration}.
\end{itemize}

For each geometry, we generate $n=500$ samples across $C=5$ classes in
$\mathbb{R}^{64}$. We report:
\begin{enumerate}[leftmargin=*,itemsep=2pt]
    \item Expected Calibration Error (ECE) of prototype softmax probabilities;
    \item ECE after temperature scaling, with the temperature fitted on the
    support set; and
    \item ECE of ridge logistic regression probabilities.
\end{enumerate}
Reliability diagrams are computed using 15 equally spaced confidence bins, and
all results are averaged over 20 random seeds.

\paragraph{Results.}
Under origin-shifted embeddings, prototype softmax is severely miscalibrated (ECE $=0.44$): the per-class additive bias $\langle t, p_c\rangle$ shifts logits across classes inconsistently, producing systematically over- or under-confident predictions. Temperature scaling reduces the error to approximately $0.22$ but cannot fully correct it: a single scalar temperature applies uniformly to all logits, while the introduced bias differs per class. Ridge logistic regression, which learns its own per-class intercept on the support set, achieves ECE $=0.001$.

Under varying-radius embeddings, prototype miscalibration is smaller in
magnitude but structurally similar: classes with larger-norm prototypes are
over-preferred even when the query is far from their true decision region.
Prototype ECE is $0.12$, temperature scaling improves it to approximately
$0.08$, and ridge logistic regression achieves near-perfect calibration
(ECE $=$ $0.001$). The reliability diagrams in Figure~\ref{fig:calibration} show
that ridge logistic regression tracks empirical accuracy much more closely
across all confidence bins.

\section{Limitations}
\label{ap:limitations}

We discuss the main limitations of \modelname{}, complementing the analysis of prototype failure modes in Section~\ref{sec:lp-unify}.

\paragraph{Fixed structural feature stack.} The encoder operates on a frozen SVD of the symmetrically normalized adjacency concatenated with $\ell{=}3$ hop propagations (Appendix~\ref{ap:real-world-exp-setup}); only the GAMLP head is meta-trained. This precomputation is what makes training cheap, but it also caps the structural information available to the encoder: graphs whose discriminative signal is poorly captured by low-rank spectral and short-range propagated features cannot be fully recovered downstream. Replacing the stack with a deeper trainable backbone would address this at the cost of the wall-clock advantage reported in Section~\ref{sec:exp-real-graphs}.

\paragraph{Composition of the meta-training pool.} \modelname{} is meta-trained on a manually selected pool of source graphs (4 NC, 15 LP, 4 GC; Appendix~\ref{ap:real-world-exp-setup}). Performance on a downstream graph plausibly depends on the pool covering its structural and feature regime; the paper does not characterize transfer to graphs that fall outside the regimes present in this pool, nor does it study automated pool selection.

\paragraph{Episode-hyperparameter range.} Meta-training uses support sizes $k\in[8,32]$ and query sizes $q\in[16,64]$ per class (Appendix~\ref{ap:real-world-exp-setup}), while evaluation extends to $k{=}512$. Behavior at support sizes much larger or smaller than the meta-training range is implicit rather than directly studied, and the ridge regularizer $\lambda$ is fixed across episodes.

\section{Additional Details on Real-World Dataset Experiments}

\subsection{Experimental Setup}
\label{ap:real-world-exp-setup}

\paragraph{Baselines and evaluation protocol.}
We compare against a diverse set of baselines: a vanilla GNN encoder (GIN), a contrastive pre-training method (GraphCL), three graph foundation models (AnyGraph, GFT, GPF), and a prompt-based method (GraphPrompt), which we treat as a foundation-model baseline by fine-tuning a single prompt across all training datasets. We evaluate transfer across three task families: node classification (NC), link prediction (LP), and graph classification (GC). For NC and GC, we vary the number of labeled support examples as $k \in \{4,8,16,32,64,512\}$; for LP, we use an 80/20 train/test split. All reported results are averaged over three random seeds, and evaluation datasets are held out from training for all methods.

\paragraph{Evaluation datasets.}
Our evaluation suite contains 20 held-out tasks: 5 node classification (Cora, CS, Photo, Arxiv, Fitness), 10 link prediction (Photo, Goodreads, Fitness, Gowalla, Arxiv, Cora, CS, Collab, web-Stanford, roadNet-PA), and 5 graph classification (MUTAG, PROTEINS, NCI1, IMDB-Binary, COLLAB).

\paragraph{Training datasets and baselines.}
\modelname{} is trained on a disjoint multi-task pool containing 4 NC datasets (CiteSeer, PubMed, Physics, Computers), 15 LP datasets (products\_tech, yelp2018, yelp\_textfeat, products\_home, steam\_textfeat, amazon\_textfeat, amazon-book, citation-2019, citation-classic, pubmed, citeseer, ppa, p2p-Gnutella06, soc-Epinions1, email-Enron), and 4 GC datasets (DD, ENZYMES, REDDIT-Multi-5K, ogbg-ppa). Episodes are sampled uniformly across task families with a 1:1:1 ratio. All baselines are trained on the same 15 LP training datasets using their respective objectives. We implement the GIN baseline using the standard GIN architecture~\citep{xu2018powerful} in PyTorch Geometric~\citep{fey2019fast}, trained as a Graph Auto-Encoder~\citep{kipf2016variational}. GraphCL is implemented with the PyGCL library~\citep{Zhu:2021tu}. For all remaining baselines, we use the authors' official implementations.

\paragraph{Encoder architecture.}
All methods share the same preprocessing. We compute node embeddings using a truncated SVD of the symmetrically normalized adjacency matrix, $\tilde{A}=D^{-1/2}AD^{-1/2}$, projected to $d=512$ dimensions with two power iterations. When node features are available, we concatenate these embeddings with an SVD of the feature matrix. From the resulting frozen SVD representation, we precompute $\ell=3$ hop adjacency propagations and concatenate them as input to a trainable GAMLP encoder~\citep{zhang2022graph}. GAMLP applies a per-hop linear projection ($512 \to 512$), node-adaptive attention over the $\ell+1$ hops using hop-0 as the attention query, a weighted sum over hops, and a residual 3-layer MLP with ReLU and dropout. This produces node embeddings $z_v \in \mathbb{R}^{512}$.

\paragraph{Ridge readout.}
On top of the encoder, \modelname{} uses a differentiable affine ridge readout following R2-D2-style closed-form solvers~\citep{bertinetto2018meta}. We augment the support embeddings with a constant feature, yielding a task-specific intercept. When regularized, this intercept is softly shrunk toward zero but is still estimated from the support set, unlike prototype classifiers whose decision boundaries remain origin-anchored. The readout solves the bias-augmented ridge problem in Eq.~\ref{eq:ridge-dual} in dual Woodbury form~\citep{petersen2008matrix}, allowing end-to-end gradient flow without inner-loop optimization. A learnable temperature scalar is applied to the ridge logits before the meta-outer-loop cross-entropy, which keeps the softmax well-conditioned despite squared-loss-fitted logits lying near $[0,1]$.

\paragraph{Episode construction and optimization.}
Episodes use variable support sizes $k \in [8,32]$ and query sizes $q \in [16,64]$ per class, with up to 64 classes. We use ridge regularization $\lambda=10$ and label smoothing of $0.1$. Training runs for 10{,}000 episodic steps with AdamW, learning rate $3{\times}10^{-4}$, weight decay $10^{-4}$, cosine annealing, and gradient clipping at norm 1.0. \modelname{}++ uses the same architecture but samples three episodes per step, one from each task family, following~\citep{yang2022efficient}.

\paragraph{Task-specific details.}
For LP episodes, positive examples are true edges and negatives are sampled uniformly at random from a fixed pool of non-edges within the same graph, using a 1:1 ratio in both support and query sets. Edge embeddings are formed by the Hadamard product of endpoint node embeddings. Positive support and query edges are masked from $A$ before the SVD and hop propagations, ensuring that the encoder does not receive the label as input. For GC, graph-level embeddings are obtained by mean-pooling node representations.

\paragraph{Compute and downstream evaluation.}
All experiments are run on a single NVIDIA RTX A6000 GPU with 48GB memory. \modelname{} trains in approximately 0.4 GPU-hours, while \modelname{}++ trains in 1.28 GPU-hours. Baseline training costs range from 11 GPU-hours for AnyGraph to more than 41 GPU-hours for GraphCL. For fair comparison, all methods are evaluated with identical frozen-embedding downstream protocols: prototypes and logistic regression for NC/GC, and dot-product scoring for LP.

\paragraph{Few-shot evaluation protocol.}
For NC and GC, we construct $k$-shot episodes by sampling $k$ support examples per class; when a class has fewer than $k$ labeled examples (notably at $k=512$ on datasets with small per-class label counts), support examples are drawn with replacement. A fixed number of query examples per class is used across all $k$ values, since query predictions are independent and the query count does not affect the trained model's decision rule. For LP, we report Hits@1000 in the standard information-retrieval sense: for each test positive edge, we rank it against the candidate-negative pool and report the fraction of test positives whose rank is at most 1000. On GC benchmarks based on TUDatasets (e.g., MUTAG with 188 graphs), reported numbers average across 3 random seeds, each using a fresh random train/test split.

\subsection{Normalized Aggregate Score}
\label{ap:normalized-aggregate}

For each evaluation cell $c$ in task family
$t \in \{\mathrm{NC}, \mathrm{LP}, \mathrm{GC}\}$, where a cell denotes a dataset
for LP and a $(\text{dataset}, k\text{-shot})$ pair for NC and GC, let
$v_{t,c}$ be a method's raw downstream score on that cell (AUC for LP, accuracy
for NC/GC). We normalize each cell by the best score achieved by any method on
that cell, $v^{*}_{t,c}$, and compute the task-family aggregate
\begin{equation}
S_t = \frac{1}{|C_t|} \sum_{c \in C_t} \frac{v_{t,c}}{v^{*}_{t,c}} .
\end{equation}
The overall score is then the macro-average across task families,
$S = (S_\mathrm{NC} + S_\mathrm{LP} + S_\mathrm{GC})/3$.

This normalized aggregate provides a simple balanced summary across heterogeneous
evaluation settings. Cell-wise normalization makes scores comparable across
metrics, such as AUC for LP and accuracy for NC/GC, while macro-averaging over
task families prevents families with more evaluation cells, such as LP, from
dominating the overall score.

\subsection{Post-hoc Linear Recovery on Frozen AnyGraph Embeddings}
\label{ap:posthoc-linear}

Table~\ref{tab:decision-boundary-acc} compares three scenarios on five representative node-classification datasets: AnyGraph with its shipped prototype readout, AnyGraph augmented with an additional linear head fit on top of its frozen embeddings (a diagnostic ablation rather than a benchmark baseline, since AnyGraph does not train or evaluate with such a readout in its original form), and \modelname{}++ with its ridge-based episodic readout. The main point is that replacing prototypes with a stronger linear head does improve AnyGraph, confirming that prototype inference leaves performance on the table. However, this gain is only partial: on Cora, CS, and Arxiv, \modelname{}++ still exceeds the post-hoc linear head fit on AnyGraph embeddings, while on Photo and Fitness the post-hoc head is competitive with or slightly better than \modelname{}++. Overall, the comparison in the paper is not altered: \modelname{} remains competitive with the actual AnyGraph system while using a substantially lighter training and inference pipeline. Following the main NC benchmark, each entry averages performance over all $k$-shot settings.

\begin{table}[htbp]
\small
\centering
\caption{Post-hoc linear recovery on frozen AnyGraph embeddings vs.\ \modelname{}++: few-shot NC accuracy averaged across all $k$-shots, using the per-cell means from the main NC benchmark (Tab.~\ref{tab:nc}). The middle column is a diagnostic ablation, since AnyGraph does not ship with a linear readout. The gain is a partial recovery over the prototype readout; \modelname{}++ remains competitive while using a lighter pipeline. Bold marks the best per row.}
\label{tab:decision-boundary-acc}
\begin{tabular}{l c c c}
\toprule
Dataset & AnyGraph (Prototype) & AnyGraph + Linear & \modelname{}++ \\
\midrule
Cora     & $42.8$ & $47.4$ & $\mathbf{50.4}$ \\
CS       & $37.5$ & $46.1$ & $\mathbf{53.0}$  \\
Photo    & $29.1$ & $\mathbf{50.8}$ & $41.6$ \\
Arxiv    & $24.0$ & $29.6$ & $\mathbf{31.0}$ \\
Fitness  & $23.8$ & $\mathbf{49.7}$ & $46.9$  \\
\bottomrule
\end{tabular}
\end{table}

\subsection{Convex-Hull Distance Score for Real-World Prototype Geometry}
\label{ap:convex-hull-score}

To quantify the convex-hull prototype failure in real-world embeddings, we
measure for each class prototype its distance to the convex hull of the
remaining prototypes. Let $\{p_1,\dots,p_C\}\subset\mathbb{R}^{d_z}$ denote the
class prototypes for a dataset. For each class $c$, we compute
\[
d_{\mathrm{CH}}(c)
=
\min_{w \in \Delta}
\left\|
\sum_{c' \neq c} w_{c'} p_{c'} - p_c
\right\|_2,
\]
where $\Delta$ is the simplex over the remaining classes:
\[
\Delta
=
\left\{
w : w_{c'} \ge 0,\;
\sum_{c' \neq c} w_{c'} = 1
\right\}.
\]
Thus, $d_{\mathrm{CH}}(c)$ is the Euclidean distance from $p_c$ to the convex
hull of the other prototypes. Smaller values indicate that the prototype lies
closer to, or inside, that hull.

For cross-dataset comparison only, we optionally normalize this quantity by the
mean pairwise prototype distance,
\[
\bar d
=
\frac{2}{C(C-1)}
\sum_{i<j}\|p_i-p_j\|_2,
\qquad
\tilde d_{\mathrm{CH}}(c)=\frac{d_{\mathrm{CH}}(c)}{\bar d}.
\]
This normalization is not essential to the geometry itself; it is used only to
make distances more comparable across datasets of different scales.

For visualization, we project the prototypes to two dimensions using PCA and
draw the outer hull in that projection. A prototype is highlighted in the
figure when its convex-hull distance score is smaller than the smallest score
among the prototypes that define that outer hull. In other words, we flag
prototypes that are more interior, in the full embedding space, than any of the
hull-defining prototypes. This criterion is used only for highlighting in the
figure; all convex-hull distances themselves are computed in the original
embedding space.

Applying this analysis to AnyGraph embeddings on \textsc{Photo},
\textsc{Computers}, and \textsc{Cora}, we find several prototypes with very
small convex-hull distances, including prototypes that are more interior than
the hull-defining classes. This provides direct evidence that the convex-hull
prototype failure is present in real-world datasets.

\section{Performance Comparison between \modelname{}/\modelname{}++ and Baselines}
\label{ap:perf-tables}

We report full performance comparison tables for \modelname{}, \modelname{}++, and all baselines across $k \in \{4,8,16,32,64,512\}$ shots. We highlight the \colorbox{c1}{first}, \colorbox{c2}{second}, and \colorbox{c3}{third} best methods per setting. Column headers abbreviate the baselines introduced in Appendix~\ref{ap:real-world-exp-setup}: GNN (vanilla GIN encoder), GraphCL, AnyG (AnyGraph), GPF, GP (GraphPrompt), and GFT\@.

For our methods, the subscripts $_L$ and $_m$ denote the readout head used at evaluation time on top of the same trained encoder. The $_L$ variant uses the linear/ridge readout used during training. This convex readout is central to our training design, since it avoids the inner-loop optimization required by non-convex task-specific heads. To test whether additional inference-time nonlinearity can improve performance, we also include an ablation, denoted by $_m$, that replaces the linear readout with a single-layer MLP at evaluation time. These results are reported in the corresponding $_m$ columns of the node classification and graph classification tables.

\paragraph{Analysis of results.}
\modelname{}++ achieves the strongest overall performance on NC and is a close second to AnyGraph on LP, where AnyGraph is explicitly optimized for link-prediction pre-training. For GC, the large standard deviations across few-shot splits prevent a clear winner from emerging. Still, \modelname{}++ shows the most stable competitive behavior: while it is not always the top method in mean accuracy, its performance is consistently close to the best-performing methods and often lies within their error margins. This indicates that \modelname{}++ provides robust transfer across graph-level tasks without requiring GC-specific specialization.

\begin{table}[h]\centering
\caption{Node classification accuracy (\%). Top-3 highlighted per row.}\label{tab:nc}
\resizebox{\columnwidth}{!}{%
\begin{tabular}{llcccccccccc}
\toprule
Data & Set & GNN & GraphCL & AnyG & GPF & GP & GFT & \modelname{}$_m$ & \modelname{}$_L$ & \modelname{}++$_m$ & \modelname{}++$_L$ \\
\midrule
\multirow{6}{*}{Cora} & 4 & $19.14{\scriptstyle \pm 1.86}$ & $18.52{\scriptstyle \pm 1.61}$ & \cellcolor{c2}$32.22{\scriptstyle \pm 0.74}$ & $18.15{\scriptstyle \pm 3.16}$ & $19.38{\scriptstyle \pm 2.47}$ & $12.47{\scriptstyle \pm 1.19}$ & $16.05{\scriptstyle \pm 2.68}$ & \cellcolor{c3}$24.94{\scriptstyle \pm 1.50}$ & $13.21{\scriptstyle \pm 2.52}$ & \cellcolor{c1}$34.94{\scriptstyle \pm 3.19}$ \\
 & 8 & $18.02{\scriptstyle \pm 5.93}$ & $15.93{\scriptstyle \pm 5.34}$ & \cellcolor{c1}$34.81{\scriptstyle \pm 3.16}$ & $18.89{\scriptstyle \pm 0.98}$ & $19.01{\scriptstyle \pm 6.17}$ & $12.47{\scriptstyle \pm 1.19}$ & $31.48{\scriptstyle \pm 1.48}$ & $32.35{\scriptstyle \pm 1.75}$ & \cellcolor{c2}$32.47{\scriptstyle \pm 5.57}$ & \cellcolor{c3}$32.47{\scriptstyle \pm 4.63}$ \\
 & 16 & $13.83{\scriptstyle \pm 2.47}$ & $15.43{\scriptstyle \pm 2.52}$ & \cellcolor{c3}$34.69{\scriptstyle \pm 3.24}$ & $13.83{\scriptstyle \pm 4.06}$ & $14.20{\scriptstyle \pm 2.38}$ & $14.20{\scriptstyle \pm 2.47}$ & $30.74{\scriptstyle \pm 1.96}$ & $33.95{\scriptstyle \pm 3.08}$ & \cellcolor{c2}$37.41{\scriptstyle \pm 5.09}$ & \cellcolor{c1}$47.16{\scriptstyle \pm 0.93}$ \\
 & 32 & $12.72{\scriptstyle \pm 2.38}$ & $10.86{\scriptstyle \pm 0.57}$ & $39.26{\scriptstyle \pm 2.80}$ & $9.63{\scriptstyle \pm 1.11}$ & $12.47{\scriptstyle \pm 1.50}$ & $11.23{\scriptstyle \pm 1.54}$ & $36.42{\scriptstyle \pm 2.47}$ & \cellcolor{c3}$39.63{\scriptstyle \pm 3.23}$ & \cellcolor{c2}$48.64{\scriptstyle \pm 4.29}$ & \cellcolor{c1}$50.37{\scriptstyle \pm 1.92}$ \\
 & 64 & $14.32{\scriptstyle \pm 2.04}$ & $15.43{\scriptstyle \pm 0.57}$ & \cellcolor{c3}$47.65{\scriptstyle \pm 3.44}$ & $11.23{\scriptstyle \pm 0.21}$ & $16.42{\scriptstyle \pm 2.88}$ & $13.58{\scriptstyle \pm 3.08}$ & $44.94{\scriptstyle \pm 4.13}$ & $46.79{\scriptstyle \pm 2.63}$ & \cellcolor{c2}$57.90{\scriptstyle \pm 4.34}$ & \cellcolor{c1}$60.99{\scriptstyle \pm 4.42}$ \\
 & 512 & $18.64{\scriptstyle \pm 6.38}$ & $17.16{\scriptstyle \pm 0.57}$ & $68.40{\scriptstyle \pm 1.90}$ & $20.12{\scriptstyle \pm 0.57}$ & $20.12{\scriptstyle \pm 1.83}$ & $13.95{\scriptstyle \pm 1.90}$ & $73.33{\scriptstyle \pm 0.37}$ & \cellcolor{c3}$73.58{\scriptstyle \pm 0.43}$ & \cellcolor{c1}$78.77{\scriptstyle \pm 0.57}$ & \cellcolor{c2}$76.30{\scriptstyle \pm 2.22}$ \\
\midrule
\multirow{6}{*}{CS} & 4 & $13.35{\scriptstyle \pm 1.95}$ & $15.80{\scriptstyle \pm 1.34}$ & \cellcolor{c3}$19.06{\scriptstyle \pm 0.64}$ & $15.49{\scriptstyle \pm 3.58}$ & $14.37{\scriptstyle \pm 1.90}$ & $9.77{\scriptstyle \pm 5.10}$ & $3.86{\scriptstyle \pm 2.08}$ & \cellcolor{c1}$25.64{\scriptstyle \pm 2.65}$ & $3.55{\scriptstyle \pm 1.34}$ & \cellcolor{c2}$25.64{\scriptstyle \pm 2.65}$ \\
 & 8 & $9.71{\scriptstyle \pm 2.44}$ & $12.69{\scriptstyle \pm 1.04}$ & \cellcolor{c3}$25.51{\scriptstyle \pm 1.73}$ & $9.44{\scriptstyle \pm 0.22}$ & $9.33{\scriptstyle \pm 1.25}$ & $8.49{\scriptstyle \pm 4.19}$ & $2.91{\scriptstyle \pm 1.55}$ & \cellcolor{c1}$32.64{\scriptstyle \pm 3.20}$ & $3.13{\scriptstyle \pm 1.80}$ & \cellcolor{c2}$32.64{\scriptstyle \pm 3.20}$ \\
 & 16 & $5.86{\scriptstyle \pm 1.05}$ & $8.11{\scriptstyle \pm 1.06}$ & \cellcolor{c3}$28.91{\scriptstyle \pm 5.23}$ & $7.57{\scriptstyle \pm 3.05}$ & $6.60{\scriptstyle \pm 1.36}$ & $6.29{\scriptstyle \pm 3.42}$ & $2.84{\scriptstyle \pm 0.98}$ & \cellcolor{c1}$43.61{\scriptstyle \pm 4.37}$ & $2.87{\scriptstyle \pm 1.31}$ & \cellcolor{c2}$43.61{\scriptstyle \pm 4.37}$ \\
 & 32 & $3.27{\scriptstyle \pm 0.24}$ & $3.82{\scriptstyle \pm 0.36}$ & \cellcolor{c3}$37.53{\scriptstyle \pm 4.56}$ & $2.96{\scriptstyle \pm 0.17}$ & $2.84{\scriptstyle \pm 1.10}$ & $9.02{\scriptstyle \pm 11.75}$ & $21.42{\scriptstyle \pm 0.19}$ & \cellcolor{c1}$60.99{\scriptstyle \pm 3.80}$ & $22.28{\scriptstyle \pm 0.56}$ & \cellcolor{c2}$60.99{\scriptstyle \pm 3.80}$ \\
 & 64 & $3.04{\scriptstyle \pm 0.35}$ & $3.82{\scriptstyle \pm 0.36}$ & \cellcolor{c3}$49.55{\scriptstyle \pm 6.06}$ & $2.20{\scriptstyle \pm 1.12}$ & $4.20{\scriptstyle \pm 0.87}$ & $2.44{\scriptstyle \pm 1.24}$ & $30.79{\scriptstyle \pm 1.73}$ & \cellcolor{c1}$73.70{\scriptstyle \pm 3.06}$ & $30.22{\scriptstyle \pm 1.11}$ & \cellcolor{c2}$73.70{\scriptstyle \pm 3.06}$ \\
 & 512 & $2.98{\scriptstyle \pm 1.99}$ & $3.78{\scriptstyle \pm 0.50}$ & $64.45{\scriptstyle \pm 6.04}$ & $2.24{\scriptstyle \pm 0.05}$ & $4.66{\scriptstyle \pm 2.43}$ & $10.07{\scriptstyle \pm 11.00}$ & \cellcolor{c1}$85.62{\scriptstyle \pm 1.54}$ & \cellcolor{c3}$83.20{\scriptstyle \pm 2.41}$ & \cellcolor{c2}$85.23{\scriptstyle \pm 1.90}$ & $81.54{\scriptstyle \pm 2.17}$ \\
\midrule
\multirow{6}{*}{Photo} & 4 & $18.24{\scriptstyle \pm 2.97}$ & $21.45{\scriptstyle \pm 2.27}$ & \cellcolor{c1}$32.42{\scriptstyle \pm 7.24}$ & $12.74{\scriptstyle \pm 1.13}$ & $23.52{\scriptstyle \pm 5.55}$ & $21.75{\scriptstyle \pm 14.71}$ & $16.01{\scriptstyle \pm 0.79}$ & \cellcolor{c2}$30.31{\scriptstyle \pm 2.33}$ & $15.94{\scriptstyle \pm 2.57}$ & \cellcolor{c3}$30.31{\scriptstyle \pm 2.33}$ \\
 & 8 & $10.17{\scriptstyle \pm 1.88}$ & $13.06{\scriptstyle \pm 2.55}$ & \cellcolor{c2}$35.95{\scriptstyle \pm 3.67}$ & $12.04{\scriptstyle \pm 0.16}$ & $13.07{\scriptstyle \pm 3.16}$ & $19.34{\scriptstyle \pm 12.51}$ & \cellcolor{c3}$34.06{\scriptstyle \pm 3.11}$ & $31.82{\scriptstyle \pm 3.12}$ & \cellcolor{c1}$36.97{\scriptstyle \pm 3.50}$ & $32.48{\scriptstyle \pm 2.01}$ \\
 & 16 & $7.71{\scriptstyle \pm 2.05}$ & $6.19{\scriptstyle \pm 1.06}$ & $21.29{\scriptstyle \pm 2.72}$ & $11.26{\scriptstyle \pm 0.23}$ & $9.53{\scriptstyle \pm 1.09}$ & $5.97{\scriptstyle \pm 2.26}$ & $29.57{\scriptstyle \pm 5.12}$ & \cellcolor{c3}$30.51{\scriptstyle \pm 3.03}$ & \cellcolor{c1}$31.71{\scriptstyle \pm 4.73}$ & \cellcolor{c2}$30.76{\scriptstyle \pm 2.77}$ \\
 & 32 & $4.32{\scriptstyle \pm 0.79}$ & $4.78{\scriptstyle \pm 0.50}$ & $21.72{\scriptstyle \pm 1.77}$ & $3.87{\scriptstyle \pm 0.67}$ & $3.45{\scriptstyle \pm 0.30}$ & $4.57{\scriptstyle \pm 0.47}$ & $34.65{\scriptstyle \pm 0.73}$ & \cellcolor{c1}$38.53{\scriptstyle \pm 2.14}$ & \cellcolor{c3}$34.89{\scriptstyle \pm 2.12}$ & \cellcolor{c2}$38.53{\scriptstyle \pm 2.14}$ \\
 & 64 & $3.89{\scriptstyle \pm 1.52}$ & $4.23{\scriptstyle \pm 0.71}$ & $28.05{\scriptstyle \pm 1.45}$ & $1.51{\scriptstyle \pm 0.04}$ & $3.35{\scriptstyle \pm 0.43}$ & $5.51{\scriptstyle \pm 4.08}$ & \cellcolor{c3}$47.11{\scriptstyle \pm 0.60}$ & \cellcolor{c1}$52.05{\scriptstyle \pm 0.47}$ & $46.58{\scriptstyle \pm 1.27}$ & \cellcolor{c2}$52.05{\scriptstyle \pm 0.47}$ \\
 & 512 & $3.34{\scriptstyle \pm 0.85}$ & $4.52{\scriptstyle \pm 0.29}$ & $34.99{\scriptstyle \pm 1.69}$ & $1.54{\scriptstyle \pm 0.04}$ & $2.32{\scriptstyle \pm 0.88}$ & $5.76{\scriptstyle \pm 6.46}$ & \cellcolor{c2}$68.60{\scriptstyle \pm 1.21}$ & \cellcolor{c3}$66.27{\scriptstyle \pm 0.81}$ & \cellcolor{c1}$68.62{\scriptstyle \pm 1.51}$ & $65.25{\scriptstyle \pm 1.20}$ \\
\midrule
\multirow{6}{*}{Arxiv} & 4 & $12.34{\scriptstyle \pm 2.59}$ & $13.30{\scriptstyle \pm 1.48}$ & \cellcolor{c3}$23.41{\scriptstyle \pm 2.98}$ & $15.38{\scriptstyle \pm 0.05}$ & $9.07{\scriptstyle \pm 0.02}$ & $11.72{\scriptstyle \pm 9.87}$ & $2.27{\scriptstyle \pm 0.35}$ & \cellcolor{c1}$29.49{\scriptstyle \pm 2.12}$ & $0.57{\scriptstyle \pm 0.09}$ & \cellcolor{c2}$23.42{\scriptstyle \pm 3.56}$ \\
 & 8 & $10.35{\scriptstyle \pm 0.85}$ & $10.24{\scriptstyle \pm 1.80}$ & \cellcolor{c1}$26.48{\scriptstyle \pm 1.65}$ & $16.15{\scriptstyle \pm 0.79}$ & $8.28{\scriptstyle \pm 0.23}$ & $4.87{\scriptstyle \pm 3.93}$ & $1.53{\scriptstyle \pm 0.25}$ & \cellcolor{c3}$25.58{\scriptstyle \pm 1.70}$ & $2.16{\scriptstyle \pm 0.36}$ & \cellcolor{c2}$26.00{\scriptstyle \pm 1.02}$ \\
 & 16 & $5.89{\scriptstyle \pm 1.20}$ & $7.46{\scriptstyle \pm 1.07}$ & \cellcolor{c2}$28.12{\scriptstyle \pm 1.89}$ & $7.65{\scriptstyle \pm 0.22}$ & $8.06{\scriptstyle \pm 0.76}$ & $2.48{\scriptstyle \pm 1.93}$ & $2.16{\scriptstyle \pm 0.16}$ & \cellcolor{c3}$26.12{\scriptstyle \pm 1.56}$ & $1.94{\scriptstyle \pm 0.27}$ & \cellcolor{c1}$29.46{\scriptstyle \pm 0.54}$ \\
 & 32 & $5.81{\scriptstyle \pm 3.64}$ & $5.29{\scriptstyle \pm 0.60}$ & \cellcolor{c3}$20.76{\scriptstyle \pm 2.87}$ & $7.94{\scriptstyle \pm 0.74}$ & $8.37{\scriptstyle \pm 0.76}$ & $3.13{\scriptstyle \pm 0.31}$ & $1.82{\scriptstyle \pm 0.54}$ & \cellcolor{c2}$28.88{\scriptstyle \pm 1.31}$ & $3.12{\scriptstyle \pm 0.58}$ & \cellcolor{c1}$29.78{\scriptstyle \pm 0.49}$ \\
 & 64 & $4.99{\scriptstyle \pm 3.94}$ & $5.14{\scriptstyle \pm 0.39}$ & \cellcolor{c3}$20.60{\scriptstyle \pm 1.84}$ & $7.78{\scriptstyle \pm 0.29}$ & $7.94{\scriptstyle \pm 1.03}$ & $3.92{\scriptstyle \pm 3.21}$ & $1.20{\scriptstyle \pm 0.18}$ & \cellcolor{c1}$35.89{\scriptstyle \pm 0.61}$ & $0.81{\scriptstyle \pm 0.08}$ & \cellcolor{c2}$35.48{\scriptstyle \pm 1.32}$ \\
 & 512 & $1.82{\scriptstyle \pm 2.72}$ & $2.56{\scriptstyle \pm 0.64}$ & \cellcolor{c3}$24.69{\scriptstyle \pm 2.82}$ & $6.08{\scriptstyle \pm 0.07}$ & $6.10{\scriptstyle \pm 0.42}$ & $1.60{\scriptstyle \pm 1.09}$ & $6.34{\scriptstyle \pm 0.07}$ & \cellcolor{c2}$38.70{\scriptstyle \pm 1.21}$ & $7.61{\scriptstyle \pm 0.29}$ & \cellcolor{c1}$41.71{\scriptstyle \pm 1.01}$ \\
\midrule
\multirow{6}{*}{Fitness} & 4 & \cellcolor{c3}$26.12{\scriptstyle \pm 9.59}$ & $24.57{\scriptstyle \pm 7.26}$ & $20.00{\scriptstyle \pm 11.45}$ & $16.03{\scriptstyle \pm 10.58}$ & $15.04{\scriptstyle \pm 5.77}$ & $9.08{\scriptstyle \pm 1.47}$ & $8.76{\scriptstyle \pm 2.44}$ & \cellcolor{c1}$41.12{\scriptstyle \pm 0.56}$ & $6.87{\scriptstyle \pm 0.48}$ & \cellcolor{c2}$40.31{\scriptstyle \pm 0.47}$ \\
 & 8 & $9.79{\scriptstyle \pm 4.77}$ & $14.16{\scriptstyle \pm 1.23}$ & $22.55{\scriptstyle \pm 5.33}$ & $9.38{\scriptstyle \pm 1.46}$ & $10.57{\scriptstyle \pm 1.11}$ & $9.03{\scriptstyle \pm 1.40}$ & \cellcolor{c3}$40.72{\scriptstyle \pm 2.83}$ & $40.32{\scriptstyle \pm 3.43}$ & \cellcolor{c2}$41.31{\scriptstyle \pm 0.77}$ & \cellcolor{c1}$41.54{\scriptstyle \pm 0.26}$ \\
 & 16 & $8.23{\scriptstyle \pm 2.72}$ & $8.82{\scriptstyle \pm 1.08}$ & $21.10{\scriptstyle \pm 1.33}$ & $5.43{\scriptstyle \pm 0.10}$ & $6.65{\scriptstyle \pm 0.67}$ & $9.03{\scriptstyle \pm 1.40}$ & \cellcolor{c2}$40.78{\scriptstyle \pm 3.11}$ & \cellcolor{c1}$42.23{\scriptstyle \pm 2.90}$ & $39.82{\scriptstyle \pm 2.16}$ & \cellcolor{c3}$40.11{\scriptstyle \pm 1.03}$ \\
 & 32 & $4.46{\scriptstyle \pm 0.39}$ & $7.23{\scriptstyle \pm 1.95}$ & $21.95{\scriptstyle \pm 3.03}$ & $5.61{\scriptstyle \pm 0.28}$ & $5.44{\scriptstyle \pm 0.88}$ & $6.59{\scriptstyle \pm 0.84}$ & \cellcolor{c2}$45.99{\scriptstyle \pm 2.21}$ & \cellcolor{c1}$46.29{\scriptstyle \pm 1.92}$ & $41.64{\scriptstyle \pm 2.37}$ & \cellcolor{c3}$43.23{\scriptstyle \pm 1.01}$ \\
 & 64 & $3.84{\scriptstyle \pm 0.67}$ & $6.60{\scriptstyle \pm 1.00}$ & $27.53{\scriptstyle \pm 2.83}$ & $9.08{\scriptstyle \pm 5.76}$ & $6.51{\scriptstyle \pm 1.76}$ & $4.80{\scriptstyle \pm 2.39}$ & \cellcolor{c3}$51.28{\scriptstyle \pm 1.62}$ & \cellcolor{c1}$55.21{\scriptstyle \pm 2.44}$ & $50.48{\scriptstyle \pm 0.71}$ & \cellcolor{c2}$51.61{\scriptstyle \pm 0.34}$ \\
 & 512 & $2.63{\scriptstyle \pm 0.55}$ & $3.53{\scriptstyle \pm 0.40}$ & $29.41{\scriptstyle \pm 3.29}$ & $1.70{\scriptstyle \pm 0.03}$ & $2.24{\scriptstyle \pm 0.58}$ & $3.24{\scriptstyle \pm 3.74}$ & \cellcolor{c2}$66.59{\scriptstyle \pm 0.45}$ & $64.29{\scriptstyle \pm 0.43}$ & \cellcolor{c1}$67.49{\scriptstyle \pm 0.30}$ & \cellcolor{c3}$64.64{\scriptstyle \pm 0.88}$ \\
\bottomrule
\end{tabular}}
\end{table}

\begin{table}[h]\centering
\caption{Link prediction. Top-3 highlighted per row.}\label{tab:lp}
\resizebox{\columnwidth}{!}{%
\begin{tabular}{llcccccccc}
\toprule
Data & Set & GNN & GraphCL & AnyG & GPF & GP & GFT & \modelname{} & \modelname{}++ \\
\midrule
\multirow{3}{*}{Photo} & AP & $74.53{\scriptstyle \pm 3.77}$ & \cellcolor{c3}$80.16{\scriptstyle \pm 0.70}$ & \cellcolor{c1}$98.05{\scriptstyle \pm 1.11}$ & $78.54{\scriptstyle \pm 0.09}$ & $77.07{\scriptstyle \pm 1.11}$ & $50.26{\scriptstyle \pm 20.95}$ & $66.07{\scriptstyle \pm 22.12}$ & \cellcolor{c2}$88.16{\scriptstyle \pm 1.60}$ \\
 & AUC & $69.72{\scriptstyle \pm 7.07}$ & $76.19{\scriptstyle \pm 1.04}$ & \cellcolor{c1}$97.65{\scriptstyle \pm 1.32}$ & \cellcolor{c3}$77.07{\scriptstyle \pm 0.10}$ & $74.96{\scriptstyle \pm 1.25}$ & $40.69{\scriptstyle \pm 36.11}$ & $62.46{\scriptstyle \pm 29.57}$ & \cellcolor{c2}$88.53{\scriptstyle \pm 1.38}$ \\
 & H@1k & $1.02{\scriptstyle \pm 0.00}$ & \cellcolor{c1}$1.03{\scriptstyle \pm 0.00}$ & \cellcolor{c2}$1.03{\scriptstyle \pm 0.00}$ & \cellcolor{c3}$1.03{\scriptstyle \pm 0.00}$ & $1.02{\scriptstyle \pm 0.00}$ & $0.43{\scriptstyle \pm 0.54}$ & $1.02{\scriptstyle \pm 0.02}$ & $1.01{\scriptstyle \pm 0.01}$ \\
\midrule
\multirow{3}{*}{Goodreads} & AP & $69.22{\scriptstyle \pm 1.19}$ & \cellcolor{c3}$69.23{\scriptstyle \pm 0.79}$ & \cellcolor{c1}$95.62{\scriptstyle \pm 3.30}$ & $65.26{\scriptstyle \pm 0.07}$ & $67.63{\scriptstyle \pm 3.10}$ & $44.52{\scriptstyle \pm 15.41}$ & $59.08{\scriptstyle \pm 14.81}$ & \cellcolor{c2}$80.96{\scriptstyle \pm 0.61}$ \\
 & AUC & $65.73{\scriptstyle \pm 2.31}$ & \cellcolor{c3}$66.58{\scriptstyle \pm 0.86}$ & \cellcolor{c1}$95.50{\scriptstyle \pm 3.31}$ & $63.94{\scriptstyle \pm 0.05}$ & $66.50{\scriptstyle \pm 1.58}$ & $32.87{\scriptstyle \pm 30.10}$ & $57.26{\scriptstyle \pm 15.89}$ & \cellcolor{c2}$84.81{\scriptstyle \pm 0.81}$ \\
 & H@1k & \cellcolor{c1}$0.06{\scriptstyle \pm 0.00}$ & \cellcolor{c2}$0.06{\scriptstyle \pm 0.00}$ & \cellcolor{c3}$0.06{\scriptstyle \pm 0.00}$ & $0.06{\scriptstyle \pm 0.00}$ & $0.06{\scriptstyle \pm 0.00}$ & $0.03{\scriptstyle \pm 0.03}$ & $0.05{\scriptstyle \pm 0.01}$ & $0.06{\scriptstyle \pm 0.00}$ \\
\midrule
\multirow{3}{*}{Fitness} & AP & $76.65{\scriptstyle \pm 3.35}$ & \cellcolor{c3}$83.53{\scriptstyle \pm 0.71}$ & \cellcolor{c1}$97.89{\scriptstyle \pm 0.26}$ & $81.70{\scriptstyle \pm 0.04}$ & $80.75{\scriptstyle \pm 0.81}$ & $49.89{\scriptstyle \pm 20.72}$ & $73.47{\scriptstyle \pm 18.12}$ & \cellcolor{c2}$88.78{\scriptstyle \pm 2.06}$ \\
 & AUC & $71.29{\scriptstyle \pm 7.21}$ & $79.41{\scriptstyle \pm 1.12}$ & \cellcolor{c1}$97.42{\scriptstyle \pm 0.20}$ & \cellcolor{c3}$79.43{\scriptstyle \pm 0.02}$ & $77.94{\scriptstyle \pm 1.69}$ & $41.44{\scriptstyle \pm 36.98}$ & $75.63{\scriptstyle \pm 15.34}$ & \cellcolor{c2}$89.15{\scriptstyle \pm 1.50}$ \\
 & H@1k & \cellcolor{c1}$0.30{\scriptstyle \pm 0.00}$ & \cellcolor{c2}$0.30{\scriptstyle \pm 0.00}$ & \cellcolor{c3}$0.30{\scriptstyle \pm 0.00}$ & $0.30{\scriptstyle \pm 0.00}$ & $0.30{\scriptstyle \pm 0.00}$ & $0.08{\scriptstyle \pm 0.13}$ & $0.27{\scriptstyle \pm 0.04}$ & $0.29{\scriptstyle \pm 0.00}$ \\
\midrule
\multirow{3}{*}{gowalla} & AP & $78.56{\scriptstyle \pm 3.90}$ & $80.09{\scriptstyle \pm 0.18}$ & \cellcolor{c1}$97.64{\scriptstyle \pm 0.34}$ & $78.04{\scriptstyle \pm 0.04}$ & $76.34{\scriptstyle \pm 0.81}$ & $30.69{\scriptstyle \pm 0.00}$ & \cellcolor{c2}$95.74{\scriptstyle \pm 0.17}$ & \cellcolor{c3}$84.22{\scriptstyle \pm 2.44}$ \\
 & AUC & $75.38{\scriptstyle \pm 5.95}$ & $76.83{\scriptstyle \pm 0.30}$ & \cellcolor{c1}$97.36{\scriptstyle \pm 0.33}$ & $74.21{\scriptstyle \pm 0.03}$ & $71.77{\scriptstyle \pm 1.63}$ & $0.02{\scriptstyle \pm 0.03}$ & \cellcolor{c2}$95.30{\scriptstyle \pm 0.23}$ & \cellcolor{c3}$84.27{\scriptstyle \pm 1.97}$ \\
 & H@1k & \cellcolor{c3}$0.23{\scriptstyle \pm 0.00}$ & $0.23{\scriptstyle \pm 0.00}$ & \cellcolor{c1}$0.24{\scriptstyle \pm 0.00}$ & $0.23{\scriptstyle \pm 0.00}$ & $0.23{\scriptstyle \pm 0.00}$ & $0.00{\scriptstyle \pm 0.00}$ & \cellcolor{c2}$0.24{\scriptstyle \pm 0.00}$ & $0.23{\scriptstyle \pm 0.00}$ \\
\midrule
\multirow{3}{*}{Arxiv} & AP & $73.23{\scriptstyle \pm 2.81}$ & \cellcolor{c3}$81.26{\scriptstyle \pm 0.91}$ & \cellcolor{c1}$99.24{\scriptstyle \pm 0.08}$ & $76.57{\scriptstyle \pm 0.07}$ & $77.78{\scriptstyle \pm 0.90}$ & $49.73{\scriptstyle \pm 20.12}$ & $81.03{\scriptstyle \pm 5.88}$ & \cellcolor{c2}$94.34{\scriptstyle \pm 0.29}$ \\
 & AUC & $67.26{\scriptstyle \pm 6.33}$ & $77.03{\scriptstyle \pm 1.22}$ & \cellcolor{c1}$98.98{\scriptstyle \pm 0.10}$ & $73.49{\scriptstyle \pm 0.09}$ & $75.35{\scriptstyle \pm 1.76}$ & $40.28{\scriptstyle \pm 35.63}$ & \cellcolor{c3}$78.25{\scriptstyle \pm 5.94}$ & \cellcolor{c2}$94.54{\scriptstyle \pm 0.49}$ \\
 & H@1k & \cellcolor{c1}$0.38{\scriptstyle \pm 0.00}$ & \cellcolor{c2}$0.38{\scriptstyle \pm 0.00}$ & \cellcolor{c3}$0.38{\scriptstyle \pm 0.00}$ & $0.38{\scriptstyle \pm 0.00}$ & $0.38{\scriptstyle \pm 0.00}$ & $0.13{\scriptstyle \pm 0.17}$ & $0.37{\scriptstyle \pm 0.00}$ & $0.38{\scriptstyle \pm 0.00}$ \\
\midrule
\multirow{3}{*}{Cora} & AP & $60.78{\scriptstyle \pm 1.24}$ & $80.45{\scriptstyle \pm 0.95}$ & \cellcolor{c2}$99.23{\scriptstyle \pm 0.09}$ & $63.31{\scriptstyle \pm 0.05}$ & $69.53{\scriptstyle \pm 4.11}$ & $38.77{\scriptstyle \pm 8.16}$ & \cellcolor{c1}$99.56{\scriptstyle \pm 0.24}$ & \cellcolor{c3}$92.11{\scriptstyle \pm 0.54}$ \\
 & AUC & $61.57{\scriptstyle \pm 4.71}$ & $76.79{\scriptstyle \pm 1.13}$ & \cellcolor{c2}$98.86{\scriptstyle \pm 0.10}$ & $60.20{\scriptstyle \pm 0.50}$ & $68.15{\scriptstyle \pm 6.21}$ & $25.39{\scriptstyle \pm 22.55}$ & \cellcolor{c1}$99.49{\scriptstyle \pm 0.23}$ & \cellcolor{c3}$93.16{\scriptstyle \pm 0.43}$ \\
 & H@1k & $39.12{\scriptstyle \pm 2.63}$ & $52.72{\scriptstyle \pm 1.03}$ & \cellcolor{c1}$62.53{\scriptstyle \pm 0.04}$ & $39.20{\scriptstyle \pm 0.06}$ & $42.14{\scriptstyle \pm 2.66}$ & $10.75{\scriptstyle \pm 10.20}$ & \cellcolor{c2}$62.53{\scriptstyle \pm 0.10}$ & \cellcolor{c3}$58.71{\scriptstyle \pm 0.97}$ \\
\midrule
\multirow{3}{*}{CS} & AP & $58.58{\scriptstyle \pm 3.33}$ & $76.60{\scriptstyle \pm 0.61}$ & \cellcolor{c1}$99.44{\scriptstyle \pm 0.19}$ & $68.39{\scriptstyle \pm 0.06}$ & $69.43{\scriptstyle \pm 1.79}$ & $45.51{\scriptstyle \pm 14.15}$ & \cellcolor{c3}$88.17{\scriptstyle \pm 7.92}$ & \cellcolor{c2}$90.05{\scriptstyle \pm 0.38}$ \\
 & AUC & $58.73{\scriptstyle \pm 0.63}$ & $73.39{\scriptstyle \pm 0.60}$ & \cellcolor{c1}$99.32{\scriptstyle \pm 0.25}$ & $66.00{\scriptstyle \pm 0.15}$ & $67.89{\scriptstyle \pm 2.69}$ & $33.77{\scriptstyle \pm 29.33}$ & \cellcolor{c3}$86.43{\scriptstyle \pm 8.41}$ & \cellcolor{c2}$91.53{\scriptstyle \pm 0.20}$ \\
 & H@1k & $3.70{\scriptstyle \pm 0.76}$ & $4.85{\scriptstyle \pm 0.03}$ & \cellcolor{c1}$4.99{\scriptstyle \pm 0.00}$ & $4.64{\scriptstyle \pm 0.02}$ & $4.62{\scriptstyle \pm 0.05}$ & $1.96{\scriptstyle \pm 2.23}$ & \cellcolor{c2}$4.97{\scriptstyle \pm 0.02}$ & \cellcolor{c3}$4.87{\scriptstyle \pm 0.04}$ \\
\midrule
\multirow{3}{*}{Collab} & AP & $63.65{\scriptstyle \pm 0.80}$ & $80.93{\scriptstyle \pm 0.50}$ & \cellcolor{c1}$99.80{\scriptstyle \pm 0.02}$ & $73.04{\scriptstyle \pm 0.05}$ & $74.18{\scriptstyle \pm 2.06}$ & $46.20{\scriptstyle \pm 15.54}$ & \cellcolor{c2}$99.75{\scriptstyle \pm 0.01}$ & \cellcolor{c3}$96.70{\scriptstyle \pm 0.19}$ \\
 & AUC & $62.39{\scriptstyle \pm 1.98}$ & $76.84{\scriptstyle \pm 0.47}$ & \cellcolor{c1}$99.72{\scriptstyle \pm 0.02}$ & $69.31{\scriptstyle \pm 0.04}$ & $70.78{\scriptstyle \pm 3.35}$ & $33.73{\scriptstyle \pm 29.36}$ & \cellcolor{c2}$99.69{\scriptstyle \pm 0.01}$ & \cellcolor{c3}$97.10{\scriptstyle \pm 0.33}$ \\
 & H@1k & \cellcolor{c1}$0.41{\scriptstyle \pm 0.00}$ & \cellcolor{c2}$0.41{\scriptstyle \pm 0.00}$ & \cellcolor{c3}$0.41{\scriptstyle \pm 0.00}$ & $0.41{\scriptstyle \pm 0.00}$ & $0.41{\scriptstyle \pm 0.00}$ & $0.17{\scriptstyle \pm 0.19}$ & $0.41{\scriptstyle \pm 0.00}$ & $0.40{\scriptstyle \pm 0.00}$ \\
\midrule
\multirow{3}{*}{web-Stan} & AP & $75.73{\scriptstyle \pm 6.11}$ & \cellcolor{c3}$80.38{\scriptstyle \pm 1.68}$ & \cellcolor{c1}$99.60{\scriptstyle \pm 0.08}$ & $74.51{\scriptstyle \pm 0.09}$ & $75.97{\scriptstyle \pm 1.20}$ & $49.96{\scriptstyle \pm 19.69}$ & $69.56{\scriptstyle \pm 15.58}$ & \cellcolor{c2}$85.97{\scriptstyle \pm 0.30}$ \\
 & AUC & $69.11{\scriptstyle \pm 7.13}$ & \cellcolor{c3}$76.17{\scriptstyle \pm 1.74}$ & \cellcolor{c1}$99.53{\scriptstyle \pm 0.08}$ & $70.71{\scriptstyle \pm 0.11}$ & $75.53{\scriptstyle \pm 1.92}$ & $41.91{\scriptstyle \pm 37.25}$ & $65.84{\scriptstyle \pm 17.19}$ & \cellcolor{c2}$85.92{\scriptstyle \pm 0.45}$ \\
 & H@1k & \cellcolor{c1}$0.22{\scriptstyle \pm 0.00}$ & \cellcolor{c2}$0.22{\scriptstyle \pm 0.00}$ & \cellcolor{c3}$0.22{\scriptstyle \pm 0.00}$ & $0.22{\scriptstyle \pm 0.00}$ & $0.22{\scriptstyle \pm 0.00}$ & $0.06{\scriptstyle \pm 0.10}$ & $0.21{\scriptstyle \pm 0.01}$ & $0.22{\scriptstyle \pm 0.00}$ \\
\midrule
\multirow{3}{*}{roadNet} & AP & $57.04{\scriptstyle \pm 4.13}$ & \cellcolor{c3}$84.23{\scriptstyle \pm 1.09}$ & \cellcolor{c1}$99.76{\scriptstyle \pm 0.04}$ & $58.08{\scriptstyle \pm 0.07}$ & $67.05{\scriptstyle \pm 3.72}$ & $38.39{\scriptstyle \pm 11.96}$ & $59.63{\scriptstyle \pm 2.93}$ & \cellcolor{c2}$95.69{\scriptstyle \pm 0.85}$ \\
 & AUC & $59.96{\scriptstyle \pm 6.08}$ & \cellcolor{c3}$81.73{\scriptstyle \pm 0.95}$ & \cellcolor{c1}$99.61{\scriptstyle \pm 0.09}$ & $56.43{\scriptstyle \pm 0.22}$ & $69.04{\scriptstyle \pm 3.83}$ & $20.09{\scriptstyle \pm 26.87}$ & $54.60{\scriptstyle \pm 2.94}$ & \cellcolor{c2}$96.74{\scriptstyle \pm 0.47}$ \\
 & H@1k & $0.12{\scriptstyle \pm 0.05}$ & \cellcolor{c1}$0.19{\scriptstyle \pm 0.00}$ & \cellcolor{c2}$0.19{\scriptstyle \pm 0.00}$ & $0.18{\scriptstyle \pm 0.00}$ & \cellcolor{c3}$0.19{\scriptstyle \pm 0.00}$ & $0.05{\scriptstyle \pm 0.09}$ & $0.19{\scriptstyle \pm 0.00}$ & $0.19{\scriptstyle \pm 0.00}$ \\
\bottomrule
\end{tabular}}
\end{table}

\begin{table}[h]\centering
\caption{Graph classification accuracy (\%). Top-3 highlighted per row.}\label{tab:gc}
\resizebox{\columnwidth}{!}{%
\begin{tabular}{llcccccccccc}
\toprule
Data & Set & GNN & GraphCL & AnyG & GPF & GP & GFT & \modelname{}$_m$ & \modelname{}$_L$ & \modelname{}++$_m$ & \modelname{}++$_L$ \\
\midrule
\multirow{4}{*}{MUTAG} & 4 & \cellcolor{c2}$72.22{\scriptstyle \pm 14.70}$ & $64.81{\scriptstyle \pm 19.51}$ & $61.11{\scriptstyle \pm 19.25}$ & \cellcolor{c1}$79.63{\scriptstyle \pm 3.21}$ & $57.41{\scriptstyle \pm 8.49}$ & \cellcolor{c3}$72.22{\scriptstyle \pm 14.70}$ & $70.37{\scriptstyle \pm 3.21}$ & $66.67{\scriptstyle \pm 9.62}$ & $62.96{\scriptstyle \pm 13.98}$ & $61.11{\scriptstyle \pm 5.56}$ \\
 & 16 & $70.37{\scriptstyle \pm 8.49}$ & \cellcolor{c2}$77.78{\scriptstyle \pm 11.11}$ & $68.52{\scriptstyle \pm 16.04}$ & \cellcolor{c3}$74.07{\scriptstyle \pm 8.49}$ & $57.41{\scriptstyle \pm 8.49}$ & \cellcolor{c1}$79.63{\scriptstyle \pm 3.21}$ & $66.67{\scriptstyle \pm 5.56}$ & $70.37{\scriptstyle \pm 3.21}$ & $55.56{\scriptstyle \pm 0.00}$ & $68.52{\scriptstyle \pm 8.49}$ \\
 & 64 & $70.37{\scriptstyle \pm 3.21}$ & \cellcolor{c2}$77.78{\scriptstyle \pm 11.11}$ & $64.81{\scriptstyle \pm 11.56}$ & \cellcolor{c1}$79.63{\scriptstyle \pm 3.21}$ & $62.96{\scriptstyle \pm 11.56}$ & $68.52{\scriptstyle \pm 11.56}$ & \cellcolor{c3}$75.93{\scriptstyle \pm 6.42}$ & $72.22{\scriptstyle \pm 5.56}$ & $62.96{\scriptstyle \pm 3.21}$ & $74.07{\scriptstyle \pm 13.98}$ \\
 & 512 & -- & -- & -- & -- & -- & -- & -- & -- & -- & -- \\
\midrule
\multirow{4}{*}{PROT} & 4 & \cellcolor{c2}$53.15{\scriptstyle \pm 13.45}$ & $48.35{\scriptstyle \pm 8.07}$ & \cellcolor{c1}$62.76{\scriptstyle \pm 5.43}$ & $47.45{\scriptstyle \pm 3.16}$ & $50.45{\scriptstyle \pm 4.68}$ & $52.85{\scriptstyle \pm 10.05}$ & $48.95{\scriptstyle \pm 3.41}$ & $51.65{\scriptstyle \pm 3.41}$ & $51.35{\scriptstyle \pm 5.02}$ & \cellcolor{c3}$53.15{\scriptstyle \pm 3.25}$ \\
 & 16 & $41.14{\scriptstyle \pm 5.20}$ & $57.06{\scriptstyle \pm 6.33}$ & \cellcolor{c2}$63.96{\scriptstyle \pm 8.87}$ & $52.25{\scriptstyle \pm 6.80}$ & $49.85{\scriptstyle \pm 4.96}$ & $50.45{\scriptstyle \pm 14.30}$ & \cellcolor{c3}$62.76{\scriptstyle \pm 5.72}$ & \cellcolor{c1}$65.17{\scriptstyle \pm 5.43}$ & $53.45{\scriptstyle \pm 0.52}$ & $57.06{\scriptstyle \pm 4.96}$ \\
 & 64 & $54.35{\scriptstyle \pm 9.63}$ & $58.86{\scriptstyle \pm 5.79}$ & \cellcolor{c1}$65.47{\scriptstyle \pm 6.82}$ & $57.06{\scriptstyle \pm 5.72}$ & $54.65{\scriptstyle \pm 5.50}$ & $48.35{\scriptstyle \pm 14.31}$ & $61.56{\scriptstyle \pm 3.64}$ & \cellcolor{c2}$64.56{\scriptstyle \pm 2.90}$ & \cellcolor{c3}$64.26{\scriptstyle \pm 4.53}$ & $62.46{\scriptstyle \pm 3.41}$ \\
 & 512 & $46.85{\scriptstyle \pm 8.01}$ & $58.86{\scriptstyle \pm 5.28}$ & \cellcolor{c1}$66.97{\scriptstyle \pm 7.50}$ & $56.46{\scriptstyle \pm 5.50}$ & $53.45{\scriptstyle \pm 1.38}$ & $48.95{\scriptstyle \pm 8.17}$ & $60.36{\scriptstyle \pm 3.25}$ & \cellcolor{c3}$64.26{\scriptstyle \pm 1.88}$ & \cellcolor{c2}$66.07{\scriptstyle \pm 3.16}$ & $63.06{\scriptstyle \pm 7.85}$ \\
\midrule
\multirow{4}{*}{NCI1} & 4 & $51.09{\scriptstyle \pm 4.94}$ & \cellcolor{c1}$54.18{\scriptstyle \pm 8.65}$ & \cellcolor{c3}$52.72{\scriptstyle \pm 9.20}$ & $51.58{\scriptstyle \pm 2.56}$ & $51.91{\scriptstyle \pm 0.56}$ & \cellcolor{c2}$53.53{\scriptstyle \pm 11.61}$ & $50.20{\scriptstyle \pm 4.61}$ & $49.72{\scriptstyle \pm 3.55}$ & $51.74{\scriptstyle \pm 2.46}$ & $51.74{\scriptstyle \pm 2.50}$ \\
 & 16 & $55.39{\scriptstyle \pm 2.03}$ & \cellcolor{c2}$57.50{\scriptstyle \pm 2.85}$ & \cellcolor{c3}$55.96{\scriptstyle \pm 7.60}$ & $54.99{\scriptstyle \pm 1.69}$ & $50.69{\scriptstyle \pm 3.96}$ & \cellcolor{c1}$59.29{\scriptstyle \pm 1.34}$ & $48.34{\scriptstyle \pm 4.37}$ & $49.96{\scriptstyle \pm 2.58}$ & $55.23{\scriptstyle \pm 4.28}$ & $55.80{\scriptstyle \pm 5.55}$ \\
 & 64 & $54.58{\scriptstyle \pm 5.96}$ & \cellcolor{c1}$60.75{\scriptstyle \pm 2.90}$ & \cellcolor{c2}$59.04{\scriptstyle \pm 4.04}$ & \cellcolor{c3}$56.20{\scriptstyle \pm 2.76}$ & $51.09{\scriptstyle \pm 1.84}$ & $55.31{\scriptstyle \pm 9.44}$ & $53.28{\scriptstyle \pm 2.35}$ & $55.80{\scriptstyle \pm 2.04}$ & $54.10{\scriptstyle \pm 2.85}$ & $53.85{\scriptstyle \pm 5.47}$ \\
 & 512 & $56.20{\scriptstyle \pm 6.05}$ & \cellcolor{c1}$62.53{\scriptstyle \pm 3.16}$ & \cellcolor{c2}$60.18{\scriptstyle \pm 2.21}$ & $55.88{\scriptstyle \pm 3.24}$ & $51.74{\scriptstyle \pm 1.49}$ & \cellcolor{c3}$59.61{\scriptstyle \pm 2.56}$ & $58.15{\scriptstyle \pm 2.32}$ & $56.53{\scriptstyle \pm 1.97}$ & $54.50{\scriptstyle \pm 4.24}$ & $54.42{\scriptstyle \pm 1.79}$ \\
\midrule
\multirow{4}{*}{IMDB-B} & 4 & \cellcolor{c1}$67.67{\scriptstyle \pm 4.62}$ & $52.00{\scriptstyle \pm 6.08}$ & $51.33{\scriptstyle \pm 5.03}$ & \cellcolor{c2}$61.00{\scriptstyle \pm 2.65}$ & $59.67{\scriptstyle \pm 11.02}$ & \cellcolor{c3}$61.00{\scriptstyle \pm 8.89}$ & $47.00{\scriptstyle \pm 7.00}$ & $52.33{\scriptstyle \pm 14.84}$ & $51.33{\scriptstyle \pm 12.01}$ & $50.67{\scriptstyle \pm 8.50}$ \\
 & 16 & \cellcolor{c1}$63.67{\scriptstyle \pm 5.51}$ & $55.67{\scriptstyle \pm 5.03}$ & $50.00{\scriptstyle \pm 3.46}$ & \cellcolor{c2}$63.67{\scriptstyle \pm 6.51}$ & \cellcolor{c3}$63.00{\scriptstyle \pm 6.00}$ & $60.00{\scriptstyle \pm 6.56}$ & $53.33{\scriptstyle \pm 5.77}$ & $52.33{\scriptstyle \pm 7.09}$ & $55.00{\scriptstyle \pm 4.58}$ & $55.67{\scriptstyle \pm 2.08}$ \\
 & 64 & \cellcolor{c3}$59.00{\scriptstyle \pm 15.87}$ & $55.67{\scriptstyle \pm 6.66}$ & $46.33{\scriptstyle \pm 3.21}$ & \cellcolor{c1}$60.67{\scriptstyle \pm 15.28}$ & $55.00{\scriptstyle \pm 12.53}$ & $56.67{\scriptstyle \pm 9.07}$ & $56.00{\scriptstyle \pm 3.00}$ & $55.33{\scriptstyle \pm 4.51}$ & \cellcolor{c2}$59.33{\scriptstyle \pm 1.15}$ & $57.67{\scriptstyle \pm 4.62}$ \\
 & 512 & \cellcolor{c1}$68.67{\scriptstyle \pm 3.21}$ & $55.67{\scriptstyle \pm 9.07}$ & $53.00{\scriptstyle \pm 2.65}$ & $52.33{\scriptstyle \pm 19.73}$ & $61.33{\scriptstyle \pm 4.16}$ & $60.67{\scriptstyle \pm 5.69}$ & $59.33{\scriptstyle \pm 5.51}$ & \cellcolor{c3}$61.67{\scriptstyle \pm 5.03}$ & $57.67{\scriptstyle \pm 4.51}$ & \cellcolor{c2}$64.67{\scriptstyle \pm 4.51}$ \\
\midrule
\multirow{4}{*}{COLLAB} & 4 & \cellcolor{c1}$52.00{\scriptstyle \pm 12.44}$ & $38.20{\scriptstyle \pm 6.48}$ & $42.07{\scriptstyle \pm 5.77}$ & $45.20{\scriptstyle \pm 4.39}$ & $39.20{\scriptstyle \pm 4.61}$ & $42.67{\scriptstyle \pm 15.17}$ & $49.40{\scriptstyle \pm 4.61}$ & \cellcolor{c2}$50.33{\scriptstyle \pm 7.62}$ & $49.00{\scriptstyle \pm 6.60}$ & \cellcolor{c3}$50.20{\scriptstyle \pm 5.54}$ \\
 & 16 & \cellcolor{c1}$48.80{\scriptstyle \pm 19.92}$ & $33.33{\scriptstyle \pm 0.42}$ & $40.40{\scriptstyle \pm 6.06}$ & $39.27{\scriptstyle \pm 2.61}$ & $34.53{\scriptstyle \pm 0.83}$ & $35.73{\scriptstyle \pm 9.83}$ & $46.40{\scriptstyle \pm 11.58}$ & \cellcolor{c3}$47.27{\scriptstyle \pm 4.31}$ & \cellcolor{c2}$47.40{\scriptstyle \pm 7.29}$ & $47.07{\scriptstyle \pm 7.57}$ \\
 & 64 & $49.00{\scriptstyle \pm 13.71}$ & $32.67{\scriptstyle \pm 1.68}$ & $34.80{\scriptstyle \pm 3.29}$ & $39.67{\scriptstyle \pm 4.29}$ & $38.87{\scriptstyle \pm 5.33}$ & $40.20{\scriptstyle \pm 5.00}$ & \cellcolor{c2}$52.27{\scriptstyle \pm 4.82}$ & $48.00{\scriptstyle \pm 6.64}$ & \cellcolor{c1}$53.40{\scriptstyle \pm 5.13}$ & \cellcolor{c3}$51.60{\scriptstyle \pm 6.07}$ \\
 & 512 & $50.00{\scriptstyle \pm 14.11}$ & $30.67{\scriptstyle \pm 1.10}$ & $35.53{\scriptstyle \pm 2.70}$ & $35.53{\scriptstyle \pm 1.62}$ & $36.93{\scriptstyle \pm 2.87}$ & $38.47{\scriptstyle \pm 5.52}$ & \cellcolor{c3}$58.33{\scriptstyle \pm 2.32}$ & $55.33{\scriptstyle \pm 1.60}$ & \cellcolor{c2}$59.80{\scriptstyle \pm 4.16}$ & \cellcolor{c1}$62.00{\scriptstyle \pm 2.31}$ \\
\bottomrule
\end{tabular}}
\end{table}


\end{document}